\documentclass[sigconf]{acmart}
\AtBeginDocument{%
  }

\usepackage{multirow}
\usepackage{threeparttable}
\usepackage{subfig}
\usepackage{enumitem}

\usepackage[textsize=tiny]{todonotes}
\usepackage{mathtools}

\usepackage{amsthm}
\usepackage{color}



\newcommand{\cut}[1]{{}}

\newcommand{\method}{\text{H$^3$GNNs }}




%



\makeatletter
\let\@@span\span
\def\sp@n{\@@span\omit\advance\@multicnt\m@ne}
\makeatother



\newcommand{\bc}{\begin{center}}
\newcommand{\ec}{\end{center}}

\newcommand{\bdm}{\begin{displaymath}}
\newcommand{\edm}{\end{displaymath}}

\newcommand{\beq}{\begin{equation}}
\newcommand{\eeq}{\end{equation}}

\newcommand{\bfl}{\begin{flushleft}}
\newcommand{\efl}{\end{flushleft}}

\newcommand{\bt}{\begin{tabbing}}
\newcommand{\et}{\end{tabbing}}

\newcommand{\beqn}{\begin{align}}
\newcommand{\eeqn}{\end{align}}

\newcommand{\beqs}{\begin{align*}} 
\newcommand{\eeqs}{\end{align*}}  





\begin{document}


\title{H$^3$GNNs: Harmonizing Heterophily and Homophily  in GNNs via  Joint Structural Node Encoding and Self-Supervised Learning}


\author{Rui Xue}
\affiliation{%
  \institution{North Carolina State University}
  \city{Raleigh}
  \country{US}}

\author{Tianfu Wu}
\affiliation{%
  \institution{North Carolina State University}
  \city{Raleigh}
  \country{US}}


\begin{abstract}

Graph Neural Networks (GNNs) struggle to balance heterophily and homophily in representation learning, a challenge further amplified in self-supervised settings. We propose \textbf{\method}, an end-to-end self-supervised learning framework that harmonizes both structural properties through two key innovations: \textbf{(i) Joint Structural Node Encoding.}  We embed nodes into a unified space combining linear and non-linear feature projections with K-hop structural representations via a Weighted Graph Convolution Network (WGCN). A cross-attention mechanism enhances awareness and adaptability to heterophily and homophily.
\textbf{(ii) Self-Supervised Learning Using Teacher-Student Predictive Architectures with Node-Difficulty Driven Dynamic Masking Strategies.} We use a teacher-student model, the student sees the masked input graph and predicts node features inferred by the teacher that sees the full input graph in the joint encoding space. To enhance learning difficulty, we introduce two novel node-predictive-difficulty-based masking strategies.
Experiments on seven benchmarks (four heterophily datasets and three homophily datasets) confirm the effectiveness and efficiency of \method across diverse graph types. Our \method achieves overall state-of-the-art performance on the four heterophily datasets, while retaining on-par performance to previous state-of-the-art methods on the three homophily datasets.

\end{abstract}

\settopmatter{printacmref=false} 
\renewcommand\footnotetextcopyrightpermission[1]{} 
\pagestyle{plain} 

\maketitle

\section{Introduction}
Graph neural networks (GNNs) have recently garnered attention as powerful frameworks for representation learning on graph-structured data~\cite{hamilton2020graph,gasteiger2018predict,velivckovic2017graph}. By effectively capturing and leveraging relational information, they demonstrate considerable advantages for a variety of core graph learning tasks, such as node classification, link prediction, and graph classification~\cite{kipf2016semi,gasteiger2018combining,velivckovic2017graph,wu2019simplifying}. Moreover, GNNs have achieved noteworthy success in a range of practical domains, including recommendation systems, molecular biology, and transportation~\cite{tang2020knowing, sankar2021graph,fout2017protein,wu2022graph, zhang2024linear}.

\begin{figure*}[t]
    \begin{minipage}[c]{0.7\textwidth}
    \centering
    \includegraphics[width=1.0\linewidth]{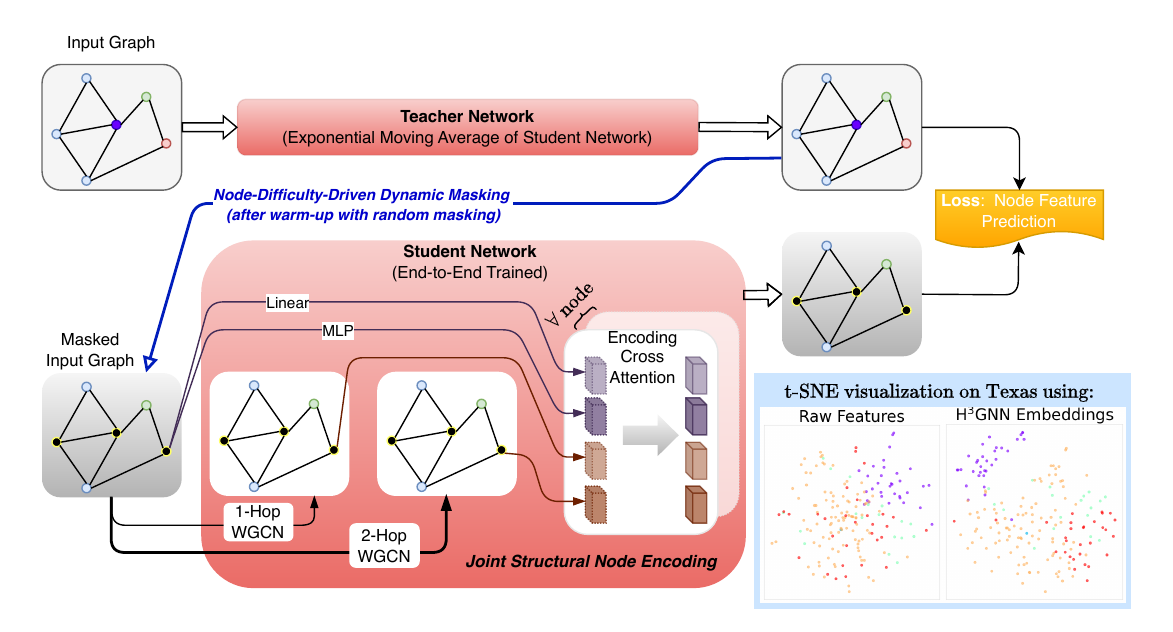}
    \end{minipage}\hfill
    \begin{minipage}[c]{0.27\textwidth}
    \caption{Illustration of our proposed \method, a simple yet effective framework for SSL from graphs of mixed structural properties (heterophily and homophily).  
    In the right-bottom, we show the t-SNE visualization on the Texas dataset using the raw node features and the features learned by our \method, which shows the effectiveness of our proposed method.
    Best viewed in color.
    See text for details.  
    }\label{fig:workflow}
    \end{minipage}
\end{figure*}

Traditional GNNs primarily employ a semi-supervised approach, achieving strong performance across various benchmarks and tasks \cite{xu2018powerful, li2021training, sun2021scalable, xue2023lazygnn}. While most of them achieve strong performance on \textbf{homophilic} graphs, they often experience a significant performance drop when applied to \textbf{heterophilic} graphs. This is because their message-passing mechanism heavily relies on the \textit{homophily assumption}—that connected nodes share similar features. However, many graphs exhibit heterophilic properties, which follow an entirely opposite pattern (i.e., love of the different). This severely limits the generalization ability of these models. As a result, several approaches, such as WRGAT \cite{suresh2021adversarial} and H2GCN \cite{zhu2020beyond}, have been proposed to address this issue. However, they are unable to achieve a win-win outcome on both heterophilic and homophilic graphs. More importantly, most real-world graphs exhibit a mix of both homophilic and heterophilic patterns, which can exist at either the local or global scale. This further increases the difficulty of learning on such graphs, entailing new methods of harmonizing both structural properties of graphs. 

Besides challenges related to structural patterns, traditional GNNs heavily rely on the availability of labeled data, making them vulnerable to significant performance degradation when labeled data is scarce \cite{xue2023efficient}. However, collecting and annotating manual labels in large-scale datasets (e.g., citation and social networks) is prohibitively expensive, or requires substantial domain expertise (e.g., chemistry and medicine). Additionally, these models are vulnerable to label-related noise, further undermining the robustness of graph semi-supervised learning.
Self-supervised learning (SSL) has achieved widespread adoption in the fields of  natural language processing (NLP)~\cite{devlin2019bert} and computer vision (CV)~\cite{he2022masked}. Unlike traditional supervised learning, which relies heavily on large amounts of labeled data, SSL leverages unlabeled data by creating proxy/pretext tasks that exploit intrinsic structures of raw data themselves as labels (such as the next word/token prediction, and masked word/image modeling). This approach not only addresses the dependency on the quantity of labeled data but also efficiently utilizes the inherent patterns and relationships within the data, enabling the development of richer representations without need for explicit annotations. Furthermore, they can also encourage the model to learn more robust representations, thereby reducing its sensitivity to noise and/or labeling bias. Building on these advantages, they have shown remarkable promise in various graph representation learning applications.

Recently, several Graph SSL approaches \cite{velickovic2019deep, zhu2020deep, hou2022graphmae, chen2022towards, xiao2024simple, tang2022graph, xiao2022decoupled, yuan2023muse} have been introduced to address the mixed structural and label challenges discussed above, demonstrating robust effectiveness on both homophilic and heterophilic graphs. 
{In these SSL frameworks for graph data, proxy tasks play a crucial role, as they provide supervision in the absence of explicit labels. Typical proxy tasks include: (1) Contrastive Losses in Contrastive learning : The model is encouraged to bring representations of similar nodes (or augmented views) closer while pushing apart those of dissimilar nodes; (2) Feature/edge Reconstruction in generative learning: Given a masked input, the model is trained to reconstruct the original node features /predict the existence or weight of edges.}
However, both approaches become problematic under certain circumstances. Contrastive learning's success hinges on relatively complex training strategies, including the careful design of negative samples and a strong reliance on high-quality data augmentation \cite{grill2020bootstrap}. However, these requirements are often challenging to meet in graph settings \cite{hou2022graphmae}, which limits the broader application of contrastive learning in this domain. They can also suffer from representation collapse, where the network converges to a state where all outputs become similar, rendering the learned features uninformative. On the other hand, generative learning methods aim to reconstruct graph data but often face challenges due to reconstruction space mismatch. This arises because the decoder demands intricate design choices and frequently struggles to fully recover the original feature space \cite{hou2022graphmae, hou2023graphmae2}. Moreover, these methods are also prone to well-known issues such as training instability, overfitting, and mode collapse.
\textit{A more detailed overview of related works is provided in Appendix~\ref{app:related}.}

To address these challenges, we propose \method which incorporates five key design components (Fig.~\ref{fig:workflow}):
\begin{itemize} [leftmargin=*]
	\item	\textbf{Masked Node Modeling} – Inspired by masked word/image modeling in NLP and computer vision (CV), we mask node features in a graph while preserving topology. Initially, random masking is applied, transitioning to a \textit{node-difficulty-driven dynamic masking} strategy to enhance learning. The goal is to reconstruct masked features in a latent space that is heterophily- and homophily-aware, rather than relying on raw, often noisy input features.
	\item	\textbf{Teacher-Student Predictive Architecture} – A teacher network (exponential moving average of the student) observes the full graph, while the student network sees only the masked graph. The student learns by predicting the teacher’s outputs, ensuring stable representation learning and alignment of learning spaces.
	\item	\textbf{Node Feature Prediction in the Latent Space} – Instead of predicting masked nodes alone (as adopted in NLP and CV), we compute prediction loss for the entire graph, addressing higher ambiguities in graphs compared to text and images.
	\item	\textbf{Joint Structural Node Encoding} – To enhance representation learning, we combine linear and MLP-based node feature transformations with K-hop structural projections via weighted GCNs. A Transformer block integrates these representations via cross-attention, ensuring adaptability to homophily and heterophily while maintaining efficiency.
	\item	\textbf{Node-Difficulty Driven Dynamic Masking} – Unlike vision/NLP approaches that rely on random masking, we propose two novel adaptive masking strategies based on node prediction difficulty. This ensures challenging yet meaningful training tasks, leading to robust and generalizable representations.
\end{itemize}

\noindent\textbf{Our Contributions.}
This paper makes four main contributions to the field of SSL from graphs with diverse heterophilic and homophilic properties:  
(i) \textbf{Teacher-Student Whole-Graph Predictive Architecture} – A self-supervised teacher-student framework stabilizes learning by having the student predict the teacher’s outputs, mitigating representation collapse.
Unlike prior masked modeling approaches, we introduce a graph-wise prediction loss in the latent space, improving feature recovery and handling structural ambiguities.
(ii) \textbf{Joint Structural Node Encoding} – A hybrid encoder combining MLP-based transformations, K-hop structural embeddings, and Transformer-based cross-attention ensures adaptability across different graph structures.
(iii) \textbf{Adaptive Masked Node Modeling} – We introduce two node-difficulty-driven dynamic masking strategies, improving over random masking to enhance representation learning on both homophilic and heterophilic graphs.
(iv) \textbf{State-of-the-Art Performance} – Extensive experiments on seven benchmark datasets confirm strong performance, demonstrating improved training effectiveness, efficiency and generalization.

\section{Method}
\label{sec:method}
In this section, we first define the problem of graph SSL (Sec.~\ref{sec:definition}). Then, we present details of our proposed \method (Sec.~\ref{sec:method}). We also provide theoretical analyses for our method in comparison with alternative encoder-decoder models (Sec.~\ref{sec:theorem}). 

\subsection{Problem Definition of Graph SSL}\label{sec:definition}
We focus on node classification and clustering tasks, and aim to learn better node features under the SSL setting in terms of linear probing and $k$-mean clustering. By linear probing, it means that after SSL, the SSL model will be frozen and we only train a linear classifier for node classification. The $k$-mean clustering evaluation is a zero-shot protocol, which directly uses the learned SSL embeddings for nodes to do clustering with the number of clusters equal to the number of total classes and evaluates the clustering results. 

Denote by $G=(V, E)$, a graph with the node set $V$ consisting of $N$ nodes (i.e. the cardinality $|V|=N$),  and the edge set $E$ defining the connectivity between nodes. 
Denote by  $f(v)\in\mathbb{R}^d (\forall v\in V)$ the raw $d$-dim input node feature vector. 
Denote by $\mathcal{Y}$ the label set for nodes, and we have labels for a subset of nodes $\mathbf{V}\subseteq V$. The label of a node is denoted as $\ell(v)\in\mathcal{Y}$ ($\forall v\in \mathbf{V}$).  In graph SSL, we do not use any labels. The labels are only used in linear probing to train the linear classifier and evaluate its performance, as well as the performance of $k$-mean clustering results. 

Let $F(\cdot;\Theta)$ be a model to be trained in graph SSL with learnable parameters $\Theta$. For each node $v\in V$ of a graph $G$, let $F(v)\in \mathbb{R}^D$ denote the learned $D$-dim node features by the SSL model. The objective  of graph SSL is to ensure the learned node features $F(v)$ are significantly better than the raw input features $f(v)$ for downstream tasks in term of both training a linear classifier and direct $k$-mean clustering. The challenge of graph SSL is how to parameterize $F(\cdot;\Theta)$ (i.e., architectural designs), and how to actually train $F(\cdot;\Theta)$ using unlabeled graphs and to achieve good performance on diverse graphs (i.e., proxy task designs). We address these challenges by proposing a simple yet effective method in this paper.

\subsection{Our Proposed \method}\label{sec:method}

As illustrated in Fig.~\ref{fig:workflow}, we adopt the masked node modeling strategy for graph SSL using a teacher-student network configuration. 

\subsubsection{Masked Node Modeling.\label{sec:mnm}} For an input graph $G=(V, E)$, we us a node-wise mask $\mathcal{M}$ (the masking strategy will be elaborated in Sec.~\ref{sec:masking}), and the node set $V$ is divided into masked nodes (i.e., $\mathcal{M}(v)=1$) and unmasked ones (i.e., $\mathcal{M}(v)=0$), $V=\mathbb{V}\cup V^c$ and $\mathbb{V}\cap V^c=\emptyset$,  with the edge set $E$ kept unchanged. For the masked nodes in $\mathbb{V}$, we replace their raw input features by learnable parameters with random initialization, e.g., from the white noise distribution, $f(v)\sim \mathcal{N}(0,1), \forall v\in \mathbb{V}$. Let $\mathbb{G}=(\mathbb{V}\cup V^c, E)$ denote the masked input graph. The masked graph is generated at each iteration in training. 

\subsubsection{Teacher-Student Predictive Architecture.} To facilitate learning a proper latent space, we leverage a teacher-student predictive architecture. Denote the student network and the teacher network by $S(\cdot;\Phi)$ and $T(\cdot;\Psi)$ respectively. The student network sees the masked input graph $\mathbb{G}$, while the teacher network sees the full graph $G$. The teacher network has the exactly same network configuration as the student, and is not trained, but uses the exponential moving average (EMA) of the student network to ensure the stability of training and the convergence of the same latent space, as commonly done in SSL from images~\cite{he2020momentum,assran2023self,bardes2024revisiting}. The student network is trained end-to-end in an epoch-wise manner with a predefined number of total epochs $I$ (e.g., $I=200$) with the model parameters iteratively updated, $\Phi_1, \cdots, \Phi_I$. At the very beginning ($i=0$), the student and teacher networks start from the same random initialization, i.e., $\Psi_0=\Phi_0$. We have, 
\begin{align}
    \Psi_{i} & = \alpha \cdot \Psi_{i-1} + (1-\alpha)\cdot \Phi_i, \quad i=1,2,\cdots, I,  \label{eq:ema}
\end{align}
where $\alpha$ is the momentum hyperparameter which is chosen to ensure that after the total $I$ epochs, the teacher network and the student network converge to almost the same point in the parameter space, i.e., $\Psi_I\approx \Phi_I$.

\subsubsection{Node Feature Prediction in the Latent Space.} Since there are no labels available in graph SSL. To estimate the student network's parameters $\Phi$, a proxy or pretext task is entailed. 
With the outputs from both the teacher network and the student network, one common approach for node feature prediction is to consider masked nodes in $\mathbb{V}$ only, as done by the word prediction in BERT~\cite{devlin2019bert} and BEiT~\cite{bao2021beit}. However, graph nodes are inherently more ambiguous because their interconnections create strong dependencies, leading to interactions between masked and unmasked nodes during message passing. Predicting only masked nodes' features between the student network and the teacher network is thus suboptimal for learning a more meaningful latent space. We propose to compute the node feature prediction loss in the latent space based on the entire graph. For a node $v\in V=\mathbb{V}\cup V^c$, for notation simplicity, denote the outputs from the student and teacher network by $S(v)\in \mathbb{R}^D$ and $T(v)\in \mathbb{R}^D$ respectively. Our proxy task loss for training the student network $\Phi$ is defined by,
\begin{equation}
    \mathcal{L}(\Phi) = \frac{1}{N}\sum_{v\in V} ||S(v)-T(v)||_2^2, \label{eq:loss}
\end{equation}
where the teacher network serves as the learnable loss functions in graph SSL to substitute the role of supervised labels. 

\subsubsection{Joint Structural Node Encoding.} With the above architectural designs and loss function choices, we need to ensure the representation learning capability  of the student and teacher networks via SSL from diverse graphs, while maintaining scalability and efficiency for large graphs. Real-world graph datasets do not strictly adhere to either homophily (similar nodes connected) or heterophily (dissimilar nodes connected). We focus on the expressivity of node features in graph SSL in terms of inducing heterophily and homophily awareness and adaptivity in $S(v)$ against the raw input features $f(v)$ for downstream tasks.  

In graph theory, homophily means that adjacent nodes $(u, v)$ tend to have similar features, and heterophily means the opposite, which can be reflected in the graph normalized Laplacian quadratic form, 
\begin{equation}
    f^\top\cdot L_{sym}\cdot f =\sum_{(u,v)\in E} A_{uv}\Bigl(\frac{f(u)}{\sqrt{d_u}}-\frac{f(v)}{\sqrt{d_v}}\Bigr)^2, 
\end{equation}
where $L_{sym}$ represents the symmetric normalized Laplacian, $L_{sym}=\mathbb{I} - D^{-\frac{1}{2}}\cdot A\cdot D^{-\frac{1}{2}}$ with the degree matrix $D$ and the adjacency matrix $A$, and $\mathbb{I}$  an identity matrix. $d_u$ and $d_v$ are the node degrees.

If a graph is homophilic, then  $f(u) \approx f(v)$  for adjacent nodes and $f^\top\cdot L_{sym}\cdot f$ is small, leading to low-frequency dominance. 
If a graph is heterophilic, the differences  $\Bigl( \frac{f(u)}{\sqrt{d_u}} - \frac{f(v)}{\sqrt{d_v}} \Bigr)$  are larger, leading to high-frequency dominance. 

Our proposed joint structural node encoding consists of,
\begin{itemize}[leftmargin=*]
    \item \textit{Heterophily-Preserved Homophily Awareness via Weighted GCN Feature Transformation}. The traditional GCN \cite{kipf2016semi} has been proven to act as a simple and efficient low-pass filter, making it good for homophilic graphs but less effective for heterophilic graphs. However, due to the complex patterns existing in  real-world graph datasets, both homophilic and heterophilic properties may simultaneously exist for the same node. For example, a node $u$ may have a neighbor $v$ that shares similar node features, while also having another neighbor $w$ whose features are totally different from the target node $u$. This makes uniform smoothing via GCN suboptimal.
    To address this, we propose to use Weighted-GCN (WGCN) by introducing learnable edge weights to adaptively control message passing, which allows the model to balance smoothing (low-pass filtering) and preserving high-frequency details, making it more effective in diverse graph structures. We have, 
    \begin{align}
       \text{GCN: }\quad H^{(l+1)} & = \sigma(\Tilde{A}\cdot H^{(l)}\cdot W^{(l)}), \\
       \text{WGCN: }\quad H^{(l+1)} & = \sigma(\Tilde{\mathbb{A}}\cdot H^{(l)}\cdot W^{(l)}),
    \end{align}
    where $H^{(l)}$ is the node feature matrix at layer $l$ (e.g., $H^{(0)}\in \mathbb{R}^{N\times d}$ with $H^{(0)}(v)=f(v), \forall v\in V$). $W^{(l)}$ is the trainable weight matrix. $\sigma(\cdot)$ is an activation function such as ReLU.  $\tilde{A} = \tilde{D}^{-1/2}(A + \mathbb{I})\tilde{D}^{-1/2} $ is the normalized adjacency matrix, incorporating self-loops, with $\tilde{D}$ being the degree matrix for $A + \mathbb{I}$. 
    
    $\Tilde{\mathbb{A}}_{ij}=e_{ij}, \text{ if } \Tilde{A}_{ij}\neq 0$ is a learnable parameter that adjusts the edge weight dynamically, meaning the model learns how much influence each neighbor should have, instead of treating all edges equally. Note that $\Tilde{\mathbb{A}}_{ij}=0, \text{ if } \Tilde{A}_{ij}= 0$, meaning the topology of the graph is retained. 

    In a homophilic setting, the model can keep weights high for similar neighbors. In a heterophilic setting, it can reduce the weight for dissimilar neighbors, preventing oversmoothing. This flexibility allows the model to handle complex structural patterns more effectively.

    We use both a 1-layer WGCN (for 1-hop message passing) and a 2-layer WGCN (for 2-hop message passing), as illustrated in Fig.~\ref{fig:workflow}, to balance model complexity and expressivity. The learnable weights in WGCN are specified to ensure $H^{(1)}, H^{(2)} \in \mathbb{R}^{N\times C}$, where the output dimension $C$ is chosen to control model complexity (e.g. $C=128$).

    \item \textit{Heterophily-Targeted Awareness via Node-Wise Feature Transformation}. For a node $v\in V$ with input features $f(v)\in \mathbb{R}^d$, we apply both a linear projection and a non-linear projection, 
    \begin{align}
        f^{(Linear)}(v) &=\text{Linear}\bigl(f(v)\bigr)= W_1\cdot f(v) + b_1, \\
        f^{(Mlp)}(v) &= \text{MLP}\bigl(f(v)\bigr) \label{eq:mlp} \\
        \nonumber & = W_3\cdot \text{Act}\Bigl(W_2\cdot f(v) + b_2\Bigr) + b_3,
    \end{align}
    where $W_1\in \mathbb{R}^{C\times d}, b_1\in \mathbb{R}^C$, and we use a MLP for the nonlinear projection with $W_2\in \mathbb{R}^{4\cdot C\times d}, b_2\in \mathbb{R}^{4\cdot C}$, and $W_3\in \mathbb{R}^{C\times 4\cdot C}, b_3\in \mathbb{R}^C$. $\text{Act}(\cdot)$ is an non-linear activation function such as GELU~\cite{HendrycksG16}.
        
    $f^{(Mlp)}(v)$ plays the role of high-pass filters to the graph signal. For high-pass filter design, some existing works \cite{liu2022beyond} employ inverse filters of GCN, however, we find that a simple MLP (Eqn.~\ref{eq:mlp}) demonstrates strong performance on heterophilic graphs (see Table \ref{tab:major})  and has also proven to be effective \cite{chen2022towards}. We further include $f^{(Linear)}(v)$ accounting for the ``DC'' component preservence. 
    \item \textit{Heterophily and Homophily Adaptivity via Transformer.} With the feature transformations above, for a node $v\in V$, we map it into a joint space,  
    \begin{equation}
        \mathbf{f}(v) = \left[f^{(Linear)}(v) ||  f^{(Mlp)}(v) || H^{(1)}(v) || H^{(2)}(v) \right], \label{eq:encoding}
    \end{equation}
    where $\cdot || \cdot$ concatenates the four feature transformations along the feature channels, and we have $ \mathbf{f}(v)\in \mathbb{R}^D$ with $D=4\cdot C$. To mix and re-calibrate the different types of features per node to induce heterophily and homophily awareness and adaptivity, we treat the each projection output as a ``token" (i.e., $4$ tokens as illustrated in Fig.~\ref{fig:workflow}), and apply a Transformer block~\cite{vaswani2017attention} to realize the encoding cross-attention. By doing so, we maintain the efficiency with our novel filter-wise attention mechanism, which is different from many graph transformer works that aim to capture node-to-node attention and suffer from scalability caused by the quadratic complexity of the Transformer model.

    In general, we can view $\mathbf{f}(v)$ as a $S$-token sequence in a $C$-dim embedding space ($S=4$ in our case), re-denoted by $X_{S,C}$, the Transformer block is defined by, 
    \begin{align}
        Z_{h,S,c} &= \text{Rearrange}\bigl(\text{Linear}(X;\theta_Z)\bigr)_{S,C\rightarrow h,S,c}, \\
        \nonumber & \qquad \hfill  Z\in \{Q, K, V\}\\
        \text{Attn}(X) & = \text{Softmax}(\frac{Q\cdot K^T}{\sqrt{c}}), \label{eq:attn}\\ 
        \text{MHSA}(X) & = \text{Rearrange}\Bigl(\text{Attn}(X)\cdot V\Bigr)_{h,S,c\rightarrow S,C}\, , \\
        Y_{S,C} &= X_{S,C} + \text{Norm}\Biggl(\text{Linear}\Bigl( \text{MHSA}\bigl( X_{S,C}\bigr);\theta_{proj}\Bigr)\Biggr)\, , \label{eq:mhsa}\\
        X_{S,C} & = Y_{S,C} + \text{Norm}\Bigl(\text{MLP}\bigl( Y_{S,C}\bigr)\Bigr), \label{eq:transformer}
        \end{align}
        where $C=h\times c$ with the head dimension being $c$ for each of the $h$ heads. $\text{Rearrange}(\cdot)$ reshapes the input tensor's shapes for forming multi-heads in computing the query, key and value, or for fusing the multi-head after attention, as the subscripts shown. $\text{MLP}(\cdot)$ is defined as Eqn.~\ref{eq:mlp}. $\text{Norm}(\cdot)$ is a feature normalization funciton such as the layer norm~\cite{ba2016layer}. 
\end{itemize}

Based on Eqn.~\ref{eq:transformer}, the outputs of the student and teacher network  respectively are,
\begin{align}
    S(v) &= \text{Flatten}(X^{\mathbb{G}}_{S,C}), \quad v\in \mathbb{G},\\
    T(v) &= \text{Flatten}(X^{G}_{S,C}),\quad v\in G,
\end{align}
which are used in computing the loss (Eqn.~\ref{eq:loss}) in optimization.

\subsubsection{Node-Difficulty Driven Dynamic Node Masking.\label{sec:masking}} Masking strategies are critical for the success of SSL. In computer vision and NLP, random masking with sufficient high masking ratios~\cite{devlin2019bert,he2022masked} is often used, which leads to hard proxy tasks to be solved via learning meaningful representations. For diverse graphs, due to intrinsic ambiguity and unknown mixed structural properties, random masking alone will not be sufficient to guide graph SSL. We propose two novel dynamic masking strategies to compute the mask $\mathcal{M}_i$ (Sec.~\ref{sec:mnm}) at each iteration for generating the masked input graph $\mathbb{G}_i$ from the full input graph $G$. These strategies adaptively consider each node's learning difficulty based on the joint encoding attention or the prediction loss, ensuring that the prediction task is challenging enough to learn robust representations with excellent generalization capabilities. 

Let $R$ be the overall node masking ratio ($R\in (0, 1)$), which is a hyperparameter for different graphs (e.g., $R=0.5$). We will mask $M=\lfloor N\times R\rfloor$ nodes. We warm up the training with purely random masking for a predefined number of epochs. Afterwards, we adopt the exploitation-exploration strategy, where we exploit two node-difficulty driven dynamic masking approaches, combined with the purely random masking (i.e., the exploration approach). Let $r$ be the exploitation ratio ($r\in [0, 1]$),  we will first select $m=M\times r$ nodes using the exploitation approach, and the remaining $M-m$ nodes are randomly sampled from the set of available $N-m$ nodes (without replacement). 
\begin{itemize}[leftmargin=*]
    \item \textbf{Node Feature Prediction Loss Driven Masking.} Based on Eqn.~\ref{eq:loss}, we define the difficulty score of a node $v$ after the current iteration by,
    \begin{equation}
        \text{Diffi}(v) = ||S(v)-T(v)||_2^2,
    \end{equation}
    which is used to compute the mask for the next iteration. We sort the nodes  $v\in V$ based on $\text{Diffi}(v)$ in a decreasing order, and the select the first $m$ nodes to mask. 
    
    This approach ensures that the model focuses on nodes where the student network's understanding is significantly lacking compared to the teacher network, thereby driving the student network to improve its representations where it is most deficient. Our \method will be denoted by \textbf{\method+Diffi} in comparisons. 

    However, this approach does not entirely prevent the issue of over-focusing on a small subset of high-difficulty nodes while neglecting the overall data diversity. To address this, we seek a probabilistic solution in the next approach.
    
    \item \textbf{Masking via Bernoulli Sampling with Node-Difficulty Informed Success Rate.} Let $p_v$ be the success rate of the Bernoulli distribution used for selecting the node $v\in V$ to be masked, i.e., $\mathcal{M}(v)\sim \text{Bernoulli}(p_v)$. We have, 
    \begin{align}
    p_v = p_{0} + \delta_v,  \quad p_{0} &= (1 - r) \times R, \label{eq:masking-prob}\\
    \nonumber \delta_v &= \left( \frac{\text{Diffi}(v)}{\text{Diffi}_{\max}} \right) \times r \times R,
    \end{align}
    where $p_0$ is the base success rate subject to the exploration approach, and it is the same for all nodes. $\delta_v$ is the node-difficulty based exploitation with $\text{Diffi}_{\max}$ the maximum value of the node difficulty score among all nodes.

    This approach ensures that all nodes have a base probability $p_0$ of being masked, while higher-difficulty nodes are masked with a greater chance, effectively guiding the model to focus more on learning from these challenging nodes. 

    Since this approach is a node-wise Bernoulli sampling, to prevent the worst cases in which either too few nodes or too many nodes (much greater than $M$) are actually masked, we do sanity check in the sampling process by either repeatedly sampling (if too few nodes have been masked) or early stopping. Our \method will be denoted by \textbf{\method+Prob }in comparisons. 
\end{itemize}

\subsection{Theoretical Underpinnings}\label{sec:theorem}
In this section, under common assumptions that are typically entailed in theoretical analyses of deep neural networks, we provide theoretical underpinnings of the faster convergence rates of our \method in comparison with alternative encoder-decoder based graph SSL methods such as GraphMAE \cite{hou2022graphmae, hou2023graphmae2} which aim to directly reconstruct raw input features of masked nodes.  

The encoder-decoder SSL architecture consists of an encoder network $E(\cdot; \Theta_{enc})$ and a separate decoder network $D(\cdot; \Theta_{dec})$. Let $\theta=(\Theta_{enc}, \Theta_{dec})$ collects all parameters. Given a masked graph signal $\Bar{f}$ from the input graph signal $f$ of $N$ nodes using a mask $\mathcal{M}$, its objective in general is to minimize,
\begin{equation}
\mathcal{L}_{E-D}(\theta) = \frac{1}{N}||D\bigl(E(\Bar{f};\Theta_{enc});\Theta_{dec}\bigr) - f||_2^2 \, . \label{eq:mae}
\end{equation} 
Our \method is to optimize Eqn.~\ref{eq:loss}. The convergence rates of the encoder-decoder methods and our \method can be bounded in the main theorem as follows. 

\begin{theorem}
\label{thm:convergence_comparison}
Consider the optimization of encoder-decoder based graph SSL in Eqn.~\ref{eq:mae} and our  proposed \method in Eqn.~\ref{eq:loss} under the same encoder architecture and following assumptions/conditions:
\begin{itemize}[leftmargin=*]
    \item \textbf{Gradient Smoothness and Lipschitz Continuity} for the encoder, the decoder, E.g., the encoder $E(\cdot;\Theta_{enc})$ has gradient $\beta_{E}$-smoothness (i.e., each gradient from iteration $t$ to $t+1$ changes at most linearly with respect to parameter shifts in $\Theta_{enc}$ with a coefficient $\beta_{E}$) and is $L_E$-Lipschitz continuous with respect to its input and/or parameters (i.e., differences such as $||E(\cdot;\Theta^{(t+1)}_{enc})-E(\cdot;\Theta^{(t)}_{enc})||$ can be bounded from the above as linear functions of $||\Theta^{(t+1)}_{enc}-\Theta^{(t)}_{enc}||$ with a coefficient $L_E$). 
    Similarly, we have $(\beta_D, L_D)$
    defined for the decoder.
    
    \item \textbf{Boundedness} from the above for gradients of the encoder, gradients of the decoder, and  reconstruction errors of the combined encoder-decoder. 
    
    So, $\|\nabla E(\cdot; \Theta^{(t)}_{enc})\| \leq B_{enc}$,  $\|\nabla D\bigl(E(\cdot; \Theta^{(t)}_{enc}); \Theta^{(t)}_{dec}\bigr)\| \leq B_{dec}$, and  $\|D\bigl(E(\bar{f};\Theta^{(t)}_{enc});\Theta^{(t)}_{dec}\bigr) - f\|\leq B_{Reconst}$ .

    \item \textbf{Strong Convexity} for the encoder, the decoder, and the student (and the teacher) in their parameters. 
    
    E.g., the encoder $E(\cdot;\Theta_{enc})$
    is $\mu_{E}$-strongly convex in their parameters $\Theta_{enc}$, i.e.,  $\langle \nabla E(\bar{f};\Theta_{enc}^{(t+1)}) - \nabla E(\bar{f};\Theta_{enc}^{(t)}), \Theta_{enc}^{(t+1)} - \Theta_{enc}^{(t)} \rangle \geq \mu_{E} \cdot \|\Theta_{enc}^{(t+1)} - \Theta_{enc}^{(t)}\|^2$. 
    Similarly, we have $\mu_{D}$ 
    defined for the decoder.

    \item \textbf{Approximation Error.} When only unmasked inputs are used, the composite functions, either the encoder-decoder or the teacher-student in our \method, achieve an approximation error $\epsilon_{E-D}$ (or $\epsilon_{T-S}$). 
\end{itemize}
Then, the following three results hold: 
\begin{itemize}[leftmargin=*]
    \item \textbf{A. Linear Convergence Bounds Under Strong Convexity.}
    For our \method, 
    \begin{align}
        \|\Phi^{(t+1)}-\Phi^*\|^2 &\leq (1-\frac{\mu^2_E}{\beta^2_E})\cdot \|\Phi^{(t)} - \Phi^*\|^2 
    \end{align}
    where $\alpha$ is the momentum parameter (Eqn.~\ref{eq:ema}). 
    For the encoder-decoder models, 
    \begin{align}
        \|\theta^{(t+1)} - \theta^*\|^2 &\leq \left(1 - \frac{\min(\mu^2_E, \mu^2_D)}{\max(\beta^2_E, \beta^2_D)}\right) \cdot \|\theta^{(t)} - \theta^*\|^2 
    \end{align}
    from which we can see our \method converges to the optimal solution $\Phi^*$ faster than the encoder-decoder counterpart to their optimal solutions $\Theta^*$ due to a smaller contraction factor $\left(1 - \frac{\mu^2_E}{\beta^2_E}\right)< \left(1 - \frac{\min(\mu^2_E, \mu^2_D)}{\max(\beta^2_E, \beta^2_D)}\right)$. This implies that \method can achieve a faster convergence.

    \item \textbf{B. Proxy Task Loss Bounds} under a Lipschitz-dependent assumption between the masked graph signal and the raw graph signal, $\|\bar{f}-f\| \leq \delta$. For our \method, 
    \begin{align}
        \|S(\bar{f};\Phi) - T(f;\Psi)\| \leq L_E \cdot \delta + \epsilon_{T-S}. 
    \end{align}
    For the encoder-decoder models, 
    \begin{equation}
    \|D\bigl(E(\bar{f};\Phi_{enc});\Theta_{dec}\bigr) - f\| \leq L_E\cdot L_D \cdot \delta + \epsilon_{E-D}.
    \end{equation}
    W.L.O.G., assume $\epsilon_{E-D}=\epsilon_{T-S}$, our \method has a smaller error upper bound, indicating that our teacher–student model is closer to the optimal solution $\theta^*$ during training, which in turn implies that its parameter updates are more stable and its convergence speed is faster (as shown in the first result above).

    \item \textbf{C. Gradient-Difference Bounds} in Encoder-Decoder Models Showing Coupling Effects of Parameter Updating, 
    \begin{align}
    & \|\nabla \mathcal{L}_{\mathrm{E-D}}(\Theta_{enc}^{(t+1)})  - \nabla \mathcal{L}_{E-D}(\Theta_{enc}^{(t)})\|
    \leq \label{eq:encoder_bound} \\
    \nonumber & \qquad 2\cdot B_{Reconst}\cdot\Bigl(\beta_E\cdot B_D + B_E\cdot L_D\cdot L_E\Bigr) \cdot \|\Theta_{enc}^{(t+1)} - \Theta_{enc}^{(t)}\| + \\
    \nonumber  &\qquad 2\cdot B_E\cdot B_{Reconst}\cdot \beta_D\cdot \|\Theta_{dec}^{(t+1)} - \Theta_{dec}^{(t)}\| + 4 \cdot B_E \cdot B_D B_{Reconst}, \\[1mm]
    &\|\nabla \mathcal{L}_{\mathrm{E-D}}(\Theta_{dec}^{(t+1)})  - \nabla \mathcal{L}_{E-D}(\Theta_{dec}^{(t)})\|
    \leq \label{eq:decoder_bound} \\
    \nonumber  &\qquad 2\cdot B_{Reconst}\cdot \beta_{D}\cdot L_{E}\cdot \|\Theta^{(t+1)}_{enc}-\Theta^{(t)}_{enc}\| + \\
    \nonumber   &\qquad 2\cdot B_{Reconst}\cdot \beta_{D}\cdot ||\Theta^{(t+1)}_{dec}-\Theta^{(t)}_{dec}|| + 4\cdot B_{D}\cdot B_{Reconst}, 
    \end{align}
where the coupling effects in Encoder-Decoder models may lead to instability in learning. The proofs are provided in the Appendix \ref{app:proof1} (for the result C) \ref{app:proof2} (for the result B), and \ref{app:proof3} (for the result A).  
\end{itemize}

\end{theorem}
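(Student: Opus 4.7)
The plan is to establish the three bullets in the order \textbf{C} $\to$ \textbf{B} $\to$ \textbf{A}, which mirrors the way the hypotheses chain together: \textbf{C} is a bookkeeping computation about how the encoder--decoder gradient moves under one parameter step; \textbf{B} is a pure forward-pass Lipschitz comparison that requires no differentiation; and \textbf{A} combines \textbf{B} with a standard strongly-convex descent lemma (together with the EMA recursion of Eqn.~\ref{eq:ema}) to produce the stated contraction constants. \textbf{Part B} is the easiest and I would do it first as a warm-up. For \method, applying $L_E$-Lipschitz continuity of the student (which shares the teacher's architecture) directly yields $\|S(\bar f;\Phi) - S(f;\Phi)\| \le L_E\delta$, after which I would fold the residual gap $\|S(f;\Phi) - T(f;\Psi)\|$ into $\epsilon_{T-S}$ by the triangle inequality. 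For encoder--decoder, the same argument is applied twice: $\delta$ propagates through $E$ to give $L_E\delta$ at the bottleneck, then through $D$ to pick up the additional $L_D$ factor, with the terminal residual absorbed into $\epsilon_{E-D}$.

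\textbf{Part C} is the main technical piece. Expanding $\mathcal{L}_{E-D}=\tfrac{1}{N}\|D(E(\bar f;\Theta_{enc});\Theta_{dec})-f\|_2^2$ by the chain rule yields a gradient of the schematic form $\tfrac{2}{N}(D-f)\cdot\nabla_x D\cdot\nabla_{\Theta_{enc}}E$ for the encoder block and analogously for the decoder block. To bound the gradient difference between iterations $t$ and $t{+}1$, I would insert telescoping intermediate terms (first changing only $E$, then only $\nabla D$, then only the residual $D-f$) so that each resulting piece is controlled by exactly one of the stipulated constants: the $\beta_E,\beta_D$ smoothness, the $L_E,L_D$ Lipschitz continuities, or the $B_E,B_D,B_{Reconst}$ boundedness bounds. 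The compound coefficients $2B_{Reconst}(\beta_E B_D + B_E L_D L_E)$ and $2B_E B_{Reconst}\beta_D$ in Eqn.~\ref{eq:encoder_bound}, and $2B_{Reconst}\beta_D L_E$ in Eqn.~\ref{eq:decoder_bound}, then drop out as the products of the constants invoked in each cut. The residual $4B_E B_D B_{Reconst}$ arises from cross terms that cannot be tied linearly to either $\|\Delta\Theta_{enc}\|$ or $\|\Delta\Theta_{dec}\|$ and must be absorbed as an additive constant using only boundedness; in the decoder bound, $L_E$ (not $L_D$) survives because only $E$ interfaces with the masked input $\bar f$.

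\textbf{Part A} then follows from the standard one-step contraction of gradient descent for a $\mu$-strongly convex, $\beta$-smooth objective with the (optimal) step size $\eta=\mu/\beta^2$, which yields precisely the parameter contraction $1-\mu^2/\beta^2$. For \method, only the student is trainable (the teacher is the EMA tracker from Eqn.~\ref{eq:ema}), and a short lemma shows that the EMA update does not displace the student's stationary point, so the rate is governed solely by $(\mu_E,\beta_E)$. For encoder--decoder, treating $\theta=(\Theta_{enc},\Theta_{dec})$ as a single block variable, the composite loss has strong-convexity modulus at least $\min(\mu_E,\mu_D)$ and smoothness at most $\max(\beta_E,\beta_D)$, immediately giving the stated slower contraction, and the strict inequality between the two contraction factors is elementary. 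The main obstacle I anticipate is the telescoping in \textbf{C}: the cuts must be arranged so that exactly the stated constants (and no cross terms) fall out, and the non-contracting additive $4B_E B_D B_{Reconst}$ must be justified without overclaiming. A secondary subtlety is the non-standard squared factor $1-\mu^2/\beta^2$ in \textbf{A}, which commits the analysis to the step size $\eta=\mu/\beta^2$ and to measuring progress in parameter distance rather than function value, and additionally requires showing that the coupling between $\Phi$ and the EMA $\Psi$ in \method does not destroy the contraction.
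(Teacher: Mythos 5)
Your proposal matches the paper's own proofs essentially step for step: Part C via the same telescoping insertion of intermediate terms (change $\nabla E$, then $\nabla D$, then the residual, with the non-contracting residual piece absorbed into the additive constant $4B_EB_DB_{Reconst}$), Part B via the same two-term triangle inequality with the Lipschitz constants propagated through the forward pass, and Part A via the standard strongly-convex/smooth one-step contraction with the optimal step size $\eta=\mu/\beta^2$ and the $\min(\mu)/\max(\beta)$ worsening for the composite encoder--decoder. The only divergence is that you flag the need to verify the EMA coupling does not destroy the student's contraction, a point the paper's proof silently omits, so your outline is if anything slightly more careful than the published argument.
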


\section{Experiments}
\label{sec:exp}
In this section, we conduct experiments to demonstrate the superior ability of our proposed \method to capture both homophilic and heterophilic patterns in graphs while maintaining high efficiency.

\begin{table*}[t]
\centering
  \caption{Results of node classification (in percent $\pm$ standard deviation across ten splits). The \textcolor{red}{best} and the \textcolor{blue}{runner-up}  results are highlighted in red and blue respectively in terms of the mean accuracy. See the spider plot in Appendix~\ref{app:plot}.}
  \label{tab:major}
   \vspace{-0.1in}
  \setlength{\tabcolsep}{1.2mm}
  \resizebox{0.9\linewidth}{!}{%
  \begin{tabular}{c|lcccc|ccc}
    \toprule
    \toprule
    &\multirow{2}{*}{Methods} & \multicolumn{4}{c|}{Heterophilic} & \multicolumn{3}{c}{Homophilic} \\
    \cmidrule(lr){3-9}
     & & Cornell & Texas & Wisconsin & Actor & Cora & CiteSeer & PubMed \\
    \midrule
    \multirow{5}{*}{SL} &
    GCN~\cite{kipf2016semi}    & 57.03$\pm$3.30 & 60.00$\pm$4.80 & 56.47$\pm$6.55 & 30.83$\pm$0.77 & 81.50$\pm$0.30 & 70.30$\pm$0.27 & 79.00$\pm$0.05 \\
    & GAT \cite{velivckovic2017graph}   & 59.46$\pm$3.63 & 61.62$\pm$3.78 & 54.71$\pm$6.87 & 28.06$\pm$1.48 & 83.02$\pm$0.19 & 72.51$\pm$0.22 & 79.87$\pm$0.03 \\
    & MLP    & 81.08$\pm$7.93 & 81.62$\pm$5.51 & 84.31$\pm$3.40 & 35.66$\pm$0.94 & 56.11$\pm$0.34 & 56.91$\pm$0.42 & 71.35$\pm$0.05 \\
    \cmidrule(lr){2-9}
    & WRGAT \cite{suresh2021breaking} & 81.62$\pm$3.90 & 83.62$\pm$5.50 & 86.98$\pm$3.78 & 36.53$\pm$0.77 & 81.97$\pm$1.50 & 70.85$\pm$1.98 & 80.86$\pm$0.55  \\
    & H2GCN \cite{zhu2020beyond} & 82.16$\pm$4.80 & 84.86$\pm$6.77 & 86.67$\pm$4.69 & 35.86$\pm$1.03 & 81.76$\pm$1.55 & 70.53$\pm$2.01 & 80.26$\pm$0.56 \\
    \midrule
    \multirow{12}{*}{SSL} &
    DGI \cite{velickovic2019deep}   & 63.35$\pm$4.61 & 60.59$\pm$7.56 & 55.41$\pm$5.96 & 29.82$\pm$0.69 & 82.29$\pm$0.56 & 71.49$\pm$0.14 & 77.43$\pm$0.84 \\
    & GMI \cite{peng2020graph}   & 54.76$\pm$5.06 & 50.49$\pm$2.21 & 45.98$\pm$2.76 & 30.11$\pm$1.92 & 82.51$\pm$1.47 & 71.56$\pm$0.56 & 79.83$\pm$0.90 \\
    & MVGRL \cite{hassani2020contrastive} & 64.30$\pm$5.43 & 62.38$\pm$5.61 & 62.37$\pm$4.32 & 30.02$\pm$0.70 & 83.03$\pm$0.27 & 72.75$\pm$0.46 & 79.63$\pm$0.38 \\
    & BGRL \cite{thakoor2021large}   & 57.30$\pm$5.51 & 59.19$\pm$5.85 & 52.35$\pm$4.12 & 29.86$\pm$0.75 & 81.08$\pm$0.17 & 71.59$\pm$0.42 & 79.97$\pm$0.36 \\
    & GRACE  \cite{zhu2020deep} & 54.86$\pm$6.95 & 57.57$\pm$5.68 & 50.00$\pm$5.83 & 29.01$\pm$0.78 & 80.08$\pm$0.53 & 71.41$\pm$0.38 & 80.15$\pm$0.34 \\
    & GraphMAE \cite{hou2022graphmae}  & 61.93$\pm$4.59 & 67.80$\pm$3.37 & 58.25$\pm$4.87 & 31.48$\pm$0.56 & \textcolor{red}{84.20}$\pm$0.40 & 73.20$\pm$0.39 & 81.10$\pm$0.34 \\
    \cmidrule(lr){2-9}
    & DSSL  \cite{xiao2022decoupled}  & 53.15$\pm$1.28 & 62.11$\pm$1.53 & 56.29$\pm$4.42 & 28.36$\pm$0.65 & 83.06$\pm$0.53 & 73.20$\pm$0.51 & 81.25$\pm$0.31 \\
    & NWR-GAE \cite{tang2022graph} & 58.64$\pm$5.61 & 69.62$\pm$6.66 & 68.23$\pm$6.11 & 30.17$\pm$0.17 & 83.62$\pm$1.61 & 71.45$\pm$2.41 & \textcolor{red}{83.44}$\pm$0.92 \\
    & HGRL \cite{chen2022towards}  & 77.62$\pm$3.25 & 77.69$\pm$2.42 & 77.51$\pm$4.03 & 36.66$\pm$0.35 & 80.66$\pm$0.43 & 68.56$\pm$1.10 & 80.35$\pm$0.58 \\
    & GraphACL \cite{Xiao2023GraphACL} & 59.33$\pm$1.48  & 71.08$\pm$2.34 & 69.22$\pm$5.69 & 30.03$\pm$1.03 & \textcolor{red}{84.20}$\pm$0.31 & \textcolor{red}{73.63}$\pm$0.22 & 82.02$\pm$0.15 \\
    & $^\dagger$MUSE \cite{yuan2023muse}  & 82.00$\pm$3.42  & 83.98$\pm$2.81 & 88.24$\pm$3.19 & 36.15$\pm$1.21 & 82.22$\pm$0.21 & 71.14$\pm$0.40 & 82.90$\pm$0.40  \\
    & GREET \cite{liu2022beyond}   & 73.51$\pm$3.15  & 83.80$\pm$2.91 & 82.94$\pm$5.69 & 35.79$\pm$1.04 & \textcolor{blue}{83.84}$\pm$0.71 & 73.25 $\pm$1.14 & 80.29$\pm$1.00 \\
    \midrule
    \multirow{2}{*}{SSL-Ours} 
    & \method +Diffi   & \textcolor{blue}{84.60}$\pm$2.43 & \textcolor{red}{92.46}$\pm$2.91 & \textcolor{blue}{92.55}$\pm$3.49 & \textcolor{blue}{36.75}$\pm$1.10 & 82.50$\pm$0.61 & \textcolor{blue}{73.36}$\pm$0.33 & \textcolor{blue}{83.42}$\pm$0.26 \\
    & \method +Prob   & \textcolor{red}{84.86}$\pm$2.48 & \textcolor{blue}{92.45}$\pm$3.78 & \textcolor{red}{92.89}$\pm$3.09 & \textcolor{red}{37.00}$\pm$0.91 & 82.85$\pm$0.18 & 73.12$\pm$0.28 & 83.25$\pm$0.16  \\
    \bottomrule
    \bottomrule
  \end{tabular}
  }
   \begin{tablenotes}
       \item[]{\footnotesize $^\dagger$ MUSE only provides hyperparameters for Cornell in their repo by the time of this submission; however, we were unable to reproduce the reported results using their released code. And, no hyperparameters were provided for other datasets.}
    \end{tablenotes}
\end{table*}

\subsection{Experimental Settings}
\label{exp:setting}
\textbf{Datasets.}
We evaluate our model on seven real-world datasets, comprising \textbf{three widely used homophilic datasets} (Cora, CiteSeer, PubMed)~\cite{sen2008collective} and \textbf{four heterophilic datasets} (Cornell, Texas, Wisconsin, Actor)\cite{pei2020geom}. 
These datasets encompass various aspects: Cora, CiteSeer, and PubMed are citation networks; Cornell, Texas, and Wisconsin are school department webpage networks; and Actor is an actor co-occurrence network derived from Wiki pages. Details of these datasets are summarized in Appendix \ref{app:dataset}. To ensure fair comparisons, we utilize the standard data splits provided in the PyG~\cite{Fey/Lenssen/2019} datasets for both our model and all baseline methods. Specifically, for heterophilic graphs, we utilize the ten available splits. In the case of homophilic graphs, given that various baselines use different split ratios, we adopt the \textbf{most common low labeling rate split in PyG by assigning 20 nodes per class as training samples} and conducting experiments ten times with different random seeds.

\vspace{0.2em}\noindent\textbf{Baselines.}
To make fair comparisons with other baselines, we adopt the widely used node classification task as our main downstream evaluation. We also conduct the experiment of node clustering in Appendix \ref{app:cluster}. Additionally, we perform further experiments on other heterophilic datasets in Appendix \ref{app:more_data} and provide comparisons with additional baselines in Appendix \ref{app:rent_sota}. We compare with four groups of baseline methods: 
\begin{itemize} [leftmargin=*]
    \item \textit{Traditional supervised learning (SL) methods:} GCN \cite{kipf2016semi}, GAT \cite{velivckovic2017graph}, and a simple MLP baseline tested by us; \item \textit{Supervised methods specifically designed for heterophilic graphs:} WRGAT \cite{suresh2021breaking}, H2GCN \cite{zhu2020beyond}; 
    \item \textit{Self supervised learning methods originally designed for homophilic graphs:} DGI \cite{velickovic2019deep}, GMI \cite{peng2020graph}, MVGRL \cite{hassani2020contrastive}, BGRL \cite{thakoor2021large}, GRACE \cite{zhu2020deep}, GraphMAE \cite{hou2022graphmae};
    \item \textit{Self supervised learning methods tailored for heterophilic graphs:} DSSL \cite{xiao2022decoupled}, NWR-GAE \cite{tang2022graph}, HGRL \cite{chen2022towards}, GraphACL \cite{Xiao2023GraphACL}, GREET \cite{liu2022beyond} and MUSE \cite{yuan2023muse}.
\end{itemize}

For evaluation, we follow the same protocol as all other baselines \cite{liu2022beyond, yuan2023muse} by freezing the model and utilizing the generated embeddings for a downstream linear classifier. Note that we reproduce the results of major baselines \cite{liu2022beyond, hou2022graphmae, xiao2024simple, yuan2023muse, suresh2021breaking, zhu2020beyond} using the hyperparameters provided in their official repositories, and we ensure that the data split is consistent across all models. However, for models that do not offer dataset-specific hyperparameters, we perform our own fine-tuning, such as MUSE. For other baselines, we derive the results from their original papers or baseline papers. For hyperparameter settings, please see Appendix~\ref{app:hyper} for more information. 
The results are shown in Table \ref{tab:major}.

\subsection{Challenges of Heterophily and Homophily for Graph Representation Learning}
Before presenting the performance of our \method, it is important to understand the challenges of heterophily and homophily in graph representation learning, and interesting to observe how well the previous state-of-the-art methods have addressed them. 
\begin{itemize} [leftmargin=*]
    \item \textit{SL methods:} GCN and GAT that focus on low-pass graph signals work well on the homophilic graph datasets, but suffer from significant performance drop on heterophilic graph datasets. WRGAT and H2GCN address these issues of GCN and GAT, leading to significant performance boost on heterophilic graph datasets, while retaining similar performance on homophilic graph datasets. To understand what the critical part is for performance improvement on the heterophilic graphs, and to test if high-pass signals indeed play a significant role for them, we test a vanilla MLP (as in Eqn.~\ref{eq:mlp}) which totally ignores the topology of graphs, and simply uses the raw input node features. We can see the simple MLP works reasonably well on heterophilic datasets in comparisons with WRGAT and H2GCN, which supports our earlier statement that traditional message passing produces smoothing operations on the graph, highly relying on the homophily assumption, and highlights that the raw node features play a critical role in GNN learning on heterophilic graphs, whereas neighbor information is essential for learning on homophilic graphs.
    Overall, these observations motivate our joint structural node encoding (Eqn.~\ref{eq:encoding}). Meanwhile, MLP suffers from drastic performance drop on homophilic graph datasets, as expected. 
    \item \textit{Previous State-of-the-art SSL methods.} Those methods (DGI, GMI, MVGRL, BGRL, GRACE and GraphMAE) that are  designed for homophilic graphs achieve significant progress in terms of bridging the SSL performance with the SL counterparts on homophilic graphs, but they inherit the drawbacks as GCN and GAT on heterophilic graphs. More recently, methods such as MUSE and GREET make promising improvement, but they do not show significant progress against the MLP SL baseline on heterophilic graphs. Our \method makes a step forward by significantly improving performance on heterophilic graphs, showing the great potential of graph SSL. 
\end{itemize}

\subsection{Linear Probing Results of Our \method}
\label{exp:perforamce}
Table \ref{tab:major} presents the performance comparisons of our \method with state-of-the-art baseline methods across seven benchmarks. The following observations can be made:

\begin{itemize} [leftmargin=*]
    \item \textbf{Our \method achieves significant improvement on heterophilic graph datasets, while retaining overall on-par performance on homophilic graph datasets.}  On the four heterophilic graph datasets, compared to previous state-of-the-art SSL methods, our \method outperforms MUSE (on Cornell by more than 2.60\%, on Texas by more than 8.45\%, and on Wisconsin by more than 4.3\%), and slightly outperforms HGRL on Actor. 
    Compared to previous SL methods, our \method outperforms all the five baselines consistently on the four datasets including WRGAT and H2GCN that are specifically designed for heterophilic graphs. 
    On the three homophilic graph datasets, our \method obtains on-par performance on both CiteSeer and PubMed, and is slightly worse than GraphMAE and GraphACL on Cora. 
    We note that we keep the same architecture for both heterophilic and homophilic graph datasets for simplicity. The overall strong performance shows that our \method is effective for both types of graphs. Our \method could improve its performance on homophilic graphic datasets with more than two WGCN layers in Eqn.~\ref{eq:encoding}. We leave the dataset-specific architectural tuning of our \method for future work.

\item \textbf{The two node-difficulty driven masking strategies in our \method perform similarly.} The Bernoulli sampling based approach (i.e., \method+Prob) is slightly better, thanks to its balance between exploration and exploitation. As we shall show in ablation studies (see Table~\ref{tab:ratio}), our proposed node-difficulty driven mask strategies are significantly better than the purely random masking strategy. 

\item \textbf{The remained challenges in the Actor dataset.} The Actor dataset remains challenging to all methods including ours, which contains extremely  complicated mix structure patterns (see Appendix \ref{app:pattern} for more information). Our \method achieves the best performance, but the overall accuracy (37\% by our method) is significantly lower than those obtained on other datasets. We also note that both traditional GCN and GAT and the simple MLP baseline achieve similar performance, which indicates that the Actor dataset is of highly mixed heterophilic and homophilic structures, reinforcing the observations from visualization in Appendix \ref{app:pattern}.  We hypothesize that our \method could be improved on the Actor dataset or similar ones by introducing node-to-node attention in addition to the node-wise encoding cross-attention (Enq.~\ref{eq:attn}), at the expense of model complexity, which we leave for future work. 
\end{itemize}

\subsection{$k$-Mean Clustering Results of Our \method}
Due to space limit, we present detailed results in the Appendix~\ref{app:cluster}. 

Our \method achieves significantly better performance than all baselines, including the state-of-the-art model MUSE, by a large margin on the Texas and Cornell datasets, with improvements of 11.26\% and 12.51\%, respectively. Moreover, \method slightly outperforms MUSE on Actor due to the complex mixed structural patterns, as introduced in Appendix~\ref{app:pattern}. It also attains comparable performance on Citeseer. These findings are consistent with those observed in linear probing based node classification tasks. Overall, our results demonstrate that \method can generate high-quality embeddings regardless of the downstream tasks and effectively handle both heterophilic and homophilic patterns, highlighting its strong generalization capability in graph representation learning.
\begin{table}[t]
\caption{Compute and Memory Comparisons}
\label{tab:memory_vr}
\vspace{-0.1in}
\begin{center}
\setlength{\tabcolsep}{2pt}
\resizebox{1.0\linewidth}{!}{%
\begin{tabular}{c|ccc|ccc|ccc}
\toprule
\toprule
 \multirow{2}{*}{\textbf{Datasets}}  & \multicolumn{3}{c}{\textsc{GPU Memory(MB)}} & \multicolumn{3}{|c}{\textsc{Epoch Time(epoch/s)}} & \multicolumn{3}{|c}{\textsc{Total Time(s)}}\\
 \cmidrule(lr){2-10}
 & \textsc{MUSE} & \textsc{GREET} & \textsc{H$^3$GNN} & \textsc{MUSE} & \textsc{GREET} & \textsc{H$^3$GNN}  & \textsc{MUSE} & \textsc{GREET} & \textsc{H$^3$GNN}\\
\cmidrule(lr){1-10}
\multirow{1}{*}{Cornell} & 86.05 & 45.67 & 46.42 & 0.049 & 0.037 & 0.019 & 7.84 & 4.17 & 2.09 \\ 
\multirow{1}{*}{Texas} & 86.61 & 40.72 & 46.59 & 0.035 & 0.038 &  0.020 & 4.35 & 4.94 & 2.32 \\
\multirow{1}{*}{Wisconsin} & 97.68 & 47.75 & 54.45 & 0.040 & 0.042 & 0.020 & 4.84 & 5.60 & 2.70\\
\bottomrule
\bottomrule
\end{tabular}
}
\end{center}
\vspace{-4mm}
\end{table}

\subsection{Compute and Memory Comparisons}
\label{exp:eff}
To verify the efficiency of our proposed approach, we conducted an empirical analysis comparing our method to two major state-of-the-art baselines: GREET and MUSE. As shown in Table~\ref{tab:memory_vr}, we measured memory usage, training time per epoch and total training time until convergence on three major datasets, Cornell, Texas, and Wisconsin, that exhibit a complex mixture of patterns. We utilized the optimal hyperparameters for each respective model. The results demonstrate that our \method achieves memory usage comparable to GREET and significantly lower than MUSE. Regarding running time, our \method requires only half the time of the other two SOTA models while achieving much better performance, as shown in Table \ref{tab:major}. This efficiency improvement is attributed to the fact that both GREET and MUSE employ an alternating training strategy for contrastive learning. 
These efficiency results clearly highlight the advantages of our \method.

Regarding the total training time until model convergence in the last column, our model is significantly faster than two baselines. By model convergence time, it means the time  at which the best model is selected (out of the total number epochs that is the same for all models). This rapid convergence is attributable to the consistency in the latent space during reconstruction and end-to-end training—advantages that the baselines do not achieve.

\subsection{Ablation Studies}
\label{exp:ablation}

\begin{table}[t]
  \centering
  \caption{Results on Ablating Three Components.}
  \label{tab:tech}
  \vspace{-0.1in}
  \setlength{\tabcolsep}{5pt} 
  \resizebox{0.95\linewidth}{!}{%
  \begin{tabular}{l|c|c|c}
    \toprule
    \toprule
    \textbf{Methods} & \textbf{Cornell} & \textbf{Texas} & \textbf{Wisconsin} \\
    \midrule
    \method (Full)     & 84.86$\pm$2.48 & 92.45$\pm$3.78 & 92.89$\pm$3.09 \\
    w/o DynMsk    & 83.06$\pm$2.20 & 90.16$\pm$3.51 & 89.71$\pm$3.21 \\
    w/o T-S \& DynMsk         & 80.54$\pm$5.10 & 85.59$\pm$4.19 & 88.23$\pm$3.39 \\
    w/o T-S \& DynMsk \& Attn        & 78.97$\pm$2.97 & 82.46$\pm$5.05 & 86.58$\pm$2.60 \\
    \bottomrule
    \bottomrule
  \end{tabular}
  \vspace{-0.4in}
  }
\end{table}

\subsubsection{Ablating Three Components.}
Our \method has three key components: a teacher-student predictive architecture (referred to \textit{T-S}), node-difficulty driven dynamic masking strategies (referred to \textit{DynMsk}), and encoding cross-attention (referred to \textit{Attn}).  
To evaluate the contribution of each individual component, we conduct an ablation study by progressively removing one component at a time. 
The results are shown in Table \ref{tab:tech}, and we can observe, 
\begin{itemize}[leftmargin=*]
    \item \textit{DynMsk} can lead to performance decreases by 1.45\% to 3.18\% across the datasets when removed, which shows the effectiveness of the proposed node-difficulty driven masking strategies against purely random masking. 
    \item \textit{T-S} predictive architecture also plays a significant role, as performance drops considerably (1.48\% - 4.57\%) when we directly reconstruct the features in the raw input space using latent space features, as done in the encoder-decoder models,  leading to a learning space mismatch. This observation is consistent with the theorem proposed in Sec.~\ref{sec:theorem}.
    \item Substituting \textit{Attn} with a simple MLP also leads to performance drops noticeably. This indicates that attention fusion can also help adaptively assign weights to different components, allowing the model to effectively handle various patterns in graphs.
\end{itemize}

\vspace{-0.1in}
\begin{table}[H]
  \centering
  \caption{The effects of the masking ratio $r$ (Eqn.~\ref{eq:masking-prob})}
  \label{tab:ratio}
  \setlength{\tabcolsep}{5pt} 
  \vspace{-0.1in}
  \resizebox{0.6\linewidth}{!}{%
  \begin{tabular}{c|c|c|c}
    \toprule
    \toprule
    \textbf{Ratio} $r$ & \textbf{Cornell} & \textbf{Texas} & \textbf{Wisconsin} \\
    \midrule
    1     & 83.78$\pm$2.70 & 91.62$\pm$3.51 & 92.55$\pm$3.13 \\
    0.8     & \textbf{84.86}$\pm$2.48 & 91.53$\pm$3.82 & 92.42$\pm$2.55 \\
    0.5    & 84.15$\pm$2.55 & \textbf{92.45}$\pm$3.78 & \textbf{92.89}$\pm$3.09 \\
    0.2         & 83.37$\pm$3.60 & 90.61$\pm$3.78 & 92.34$\pm$2.51 \\
    0         & 83.06$\pm$2.20 & 90.16$\pm$3.51 & 89.71$\pm$3.21 \\
    \bottomrule
    \bottomrule
  \end{tabular}
  }
\vspace{-0.1in}
\end{table}

\subsubsection{Masking Ratio}
We also evaluate the performance under different dynamic masking ratio across various datasets in Table \ref{tab:ratio}. We test the  probabilistic masking for all datasets (Eqn.~\ref{eq:masking-prob}). We observe that different datasets require varying masking ratios, highlighting the necessity of incorporating random masking with our proposed dynamic masking.

\section{Conclusion}
In this paper  we have presented \method, a self-supervised framework designed to harmonize heterophily and homophily in GNNs. Through our joint structural node encoding, which integrates linear and non-linear feature transformations with K-hop structural embeddings, \method adapts effectively to both homophilic and heterophilic graphs. Moreover, our teacher-student predictive paradigm, coupled with dynamic node-difficulty-based masking, further enhances robustness by providing progressively more challenging training signals. Empirical results across seven benchmark datasets demonstrate that \method consistently achieves state-of-the-art performance under heterophilic conditions using both linear probing and $k$-mean clustering evaluation protocols, while matching top methods on homophilic datasets. These findings underscore \method’s capability to address the key challenges of capturing mixed structural properties without sacrificing efficiency.

\nocite{langley00}

\bibliography{ref}
\bibliographystyle{ACM-Reference-Format}

\newpage
\appendix
\onecolumn
\appendix
\onecolumn

\setcounter{section}{0}
\section{Related Work}
\label{app:related}
\subsection{Learning on Heterophilic Graphs}

Heterophilic graphs are prevalent in various domains, such as online transaction networks ~\cite{pandit2007netprobe}, dating networks ~\cite{altenburger2018monophily}, and molecular networks ~\cite{zhu2020beyond}. Recently, significant efforts have been made to design novel GNNs that effectively capture information in heterophilic settings, where connected nodes possess dissimilar features and belong to different classes.

Some studies propose capturing information from long-range neighbors from various distance ~\cite{li2022finding, liu2021non, abu2019mixhop, pei2020geom, suresh2021breaking}. For example, MixHop ~\cite{abu2019mixhop} concatenates information from multi-hop neighbors at each GNN layer. Geom-GCN ~\cite{pei2020geom} identifies potential neighbors in a continuous latent space. WRGAT ~\cite{suresh2021breaking} captures information from distant nodes by defining the type and weight of edges across the entire graph to reconstruct a computation graph. 

Other approaches focus on modifying traditional GNN architectures to achieve adaptive message passing from the neighborhood ~\cite{chen2020simple, chien2020adaptive, yan2021two, zhu2020beyond}. For instance, GPR-GNN ~\cite{chien2020adaptive} incorporates learnable weights into the representations of each layer using the Generalized PageRank (GPR) technique, while H2GCN ~\cite{zhu2020beyond} removes self-loop connections and employs a non-mixing operation in the GNN layer to emphasize the features of the ego node.

Additionally, some papers approach the problem from spectral graph theory ~\cite{luan2021heterophily, bo2021beyond}, claiming that high-pass filters can be beneficial in heterophilic graphs by sharpening the node features between neighbors and preserving high-frequency graph signals.

However, these methods still heavily rely on labeled data, which is impractical for real-world datasets due to the significant manual effort required and the necessity of ensuring label quality. Furthermore, they are limited in their ability to effectively learn from the data itself without extensive supervision.

\subsection{Graph Representation Learning via SSL}
Self-supervised learning (SSL) has gained a lot of attention in the realm of graph graph representation learning. Graph SSL approaches are generally divided into two primary categories: graph contrastive learning and graph generative learning.

\subsubsection{Graph contrastive learning}
Contrast-based methods generate representations from multiple views of a graph and aim to maximize their agreement, demonstrating effective practices in recent research. For example, DGI~\cite{velivckovic2018deep} and InfoGraph~\cite{sun2020infograph} utilize node-graph mutual information maximization to capture both local and global information. MVGRL~\cite{hassani2020contrastive} leverages graph diffusion to create an additional view of the graph and contrasts node-graph representations across these distinct views. GCC~\cite{qiu2020gcc} employs subgraph-based instance discrimination and adopts the InfoNCE loss as its pre-training objective. GRACE~\cite{zhu2020deep} and GraphCL~\cite{you2020graph} learn node or graph representations by maximizing the agreement between different augmentations while treating other nodes or graphs as negative instances. BGRL~\cite{thakoor2021bootstrapped} contrasts two augmented versions using inter-view representations without relying on negative samples. Additionally, CCA-SSG~\cite{zhang2021canonical} adopts a feature-level objective for graph SSL, aiming to reduce the correlation between different views. These contrast-based approaches effectively harness the structural and feature information inherent in graph data, contributing to the advancement of self-supervised learning on graphs.

However, most of these methods are based on the homophily assumption. Recent works have demonstrated that SSL can also benefit heterophilic graphs. For instance, HGRL~\cite{chen2022towards} enhances node representations on heterophilic graphs by reconstructing similarity matrices to generate two types of feature augmentations based on topology and features. GraphACL~\cite{xiao2024simple}  predicts the original neighborhood signal of each node using a predictor. MUSE~~\cite{yuan2023muse} constructs contrastive views by perturbing both the features and the graph topology, and it learns a graph-structure-based combiner. GREET ~\cite{liu2022beyond} employs an edge discriminator to separate the graph into homophilic and heterophilic components, then applies low-pass and high-pass filters accordingly. However, these methods rely on the meticulous design of negative samples to provide effective contrastive signals. Moreover, although some approaches such as GREET and MUSE achieve impressive results, they require alternative training. This significantly increases computational overhead and may lead to suboptimal performance.

\subsubsection{Graph generative learning}
Generation-based methods reconstruct graph data by focusing on either the features and the structure of the graph or both. Classic generation-based approaches include GAE ~\cite{kipf2016variational}, VGAE ~\cite{kipf2016variational}, and MGAE ~\cite{wang2017mgae}, which primarily aim to reconstruct the structural information of the graph, as well as S2GAE ~\cite{tan2023s2gae}. In contrast, GraphMAE ~\cite{hou2022graphmae} and GraphMAE2 ~\cite{hou2023graphmae2} utilize masked feature reconstruction as their primary objective, incorporating auxiliary designs to achieve performance that is comparable to or better than contrastive methods.

In the context of generative learning on heterophilic graphs, DSSL~\cite{xiao2022decoupled} operates under the assumption of a graph generation process, decoupling diverse patterns to effectively capture high-order information. Similarly, NWR-GAE ~\cite{tang2022graph} jointly predicts the node degree and the distribution of neighbor features. However, despite these innovative approaches, their performance on node classification benchmarks is often unsatisfactory~\cite{hou2022graphmae}.

\section{Proof of Gradient-Difference Bounds}
\label{app:proof1}

\begin{theorem}
\label{thm:convergence_comparison}
Consider the optimization of encoder-decoder based graph SSL in Eqn.~\ref{eq:mae} and our  proposed \method in Eqn.~\ref{eq:loss} under the same encoder architecture and following assumptions/conditions:
\begin{itemize}[leftmargin=*]
    \item \textbf{Gradient Smoothness and Lipschitz Continuity} for the encoder, the decoder, 
    E.g., the encoder $E(\cdot;\Theta_{enc})$ has gradient $\beta_{E}$-smoothness (i.e., each gradient from iteration $t$ to $t+1$ changes at most linearly with respect to parameter shifts in $\Theta_{enc}$ with a coefficient $\beta_{E}$) and is $L_E$-Lipschitz continuous with respect to its input and/or parameters (i.e., differences such as $||E(\cdot;\Theta^{(t+1)}_{enc})-E(\cdot;\Theta^{(t)}_{enc})||$ can be bounded from the above as linear functions of $||\Theta^{(t+1)}_{enc}-\Theta^{(t)}_{enc}||$ with a coefficient $L_E$). 
    Similarly, we have $(\beta_D, L_D)$
    defined for the decoder.

    \item \textbf{Boundedness} from the above for gradients of the encoder, gradients of the decoder, and  reconstruction errors of the combined encoder-decoder. 
    
    So, $\|\nabla E(\cdot; \Theta^{(t)}_{enc})\| \leq B_{E}$,  $\|\nabla D\bigl(E(\cdot; \Theta^{(t)}_{enc}); \Theta^{(t)}_{dec}\bigr)\| \leq B_{D}$, and  $\|D\bigl(E(\bar{f};\Theta^{(t)}_{enc});\Theta^{(t)}_{dec}\bigr) - f\|\leq B_{Reconst}$ .

    \item \textbf{Strong Convexity} for the encoder, the decoder, and the student (and the teacher) in their parameters. 
    
    E.g., the encoder $E(\cdot;\Theta_{enc})$
    is $\mu_{E}$-strongly convex in their parameters $\Theta_{enc}$, i.e.,  $\langle \nabla E(\bar{f};\Theta_{enc}^{(t+1)}) - \nabla E(\bar{f};\Theta_{enc}^{(t)}), \Theta_{enc}^{(t+1)} - \Theta_{enc}^{(t)} \rangle \geq \mu_{E} \cdot \|\Theta_{enc}^{(t+1)} - \Theta_{enc}^{(t)}\|^2$. 
    Similarly, we have $\mu_{D}$ 
    defined for the decoder.

    \item \textbf{Approximation Error.} When only unmasked inputs are used, the composite functions, either the encoder-decoder or the teacher-student in our \method, achieve an approximation error $\epsilon_{E-D}$ (or $\epsilon_{T-S}$). 
\end{itemize}
Then, the following three results hold: 
\begin{itemize}[leftmargin=*]
    \item \textbf{Linear Convergence Bounds Under Strong Convexity.}
    For our \method, 
    \begin{align}
        \|\Phi^{(t+1)}-\Phi^*\|^2 &\leq (1-\frac{\mu^2_E}{\beta^2_E})\cdot \|\Phi^{(t)} - \Phi^*\|^2 
    \end{align}
    where $\alpha$ is the momentum parameter (Eqn.~\ref{eq:ema}). 
    For the encoder-decoder models, 
    \begin{align}
        \|\theta^{(t+1)} - \theta^*\|^2 &\leq \left(1 - \frac{\min(\mu^2_E, \mu^2_D)}{\max(\beta^2_E, \beta^2_D)}\right) \|\theta^{(t)} - \theta^*\|^2 
    \end{align}
    from which we can see our \method converges to the optimal solution $\Phi^*$ faster than the encoder-decoder counterpart to their optimal solutions $\Theta^*$ due to a smaller contraction factor $\left(1 - \frac{\mu^2_E}{\beta^2_E}\right)< \left(1 - \frac{\min(\mu^2_E, \mu^2_D)}{\max(\beta^2_E, \beta^2_D)}\right)$. 
    This implies that \method can achieve a faster convergence.

    \item \textbf{Proxy Task Loss Bounds} under a Lipschitz-dependent assumption between the masked graph signal and the raw graph signal, $\|\bar{f}-f\| \leq \delta$. For our \method, 
    \begin{align}
        \|S(\bar{f};\Phi) - T(f;\Psi)\| \leq L_E \cdot \delta + \epsilon_{T-S}. 
    \end{align}
    For the encoder-decoder models, 
    \begin{equation}
    \|D\bigl(E(\bar{f};\Phi_{enc});\Theta_{dec}\bigr) - f\| \leq L_E\cdot L_D \cdot \delta + \epsilon_{E-D}.
    \end{equation}
    W.L.O.G., assume $\epsilon_{E-D}=\epsilon_{T-S}$, our \method has a smaller error upper bound, indicating that our teacher–student model is closer to the optimal solution $\theta^*$ during training, which in turn implies that its parameter updates are more stable and its convergence speed is faster (as shown in the first result above).

    \item \textbf{Gradient-Difference Bounds} in Encoder-Decoder Models Showing Coupling Effects of Parameter Updating, 
    \begin{align}
     \|\nabla \mathcal{L}_{\mathrm{E-D}}(\Theta_{enc}^{(t+1)})  - \nabla \mathcal{L}_{E-D}(\Theta_{enc}^{(t)})\|
    &\leq 2 B_{Reconst}\Bigl(\beta_E B_D + B_{E}L_D L_E\Bigr)  \|\Theta_{enc}^{(t+1)} - \Theta_{enc}^{(t)}\| +
    \nonumber 2 B_{E} B_{Reconst} \beta_D \|\Theta_{dec}^{(t+1)} - \Theta_{dec}^{(t)}\| + 4B_E B_D B_{Reconst}, \\[1mm]
    \|\nabla \mathcal{L}_{\mathrm{E-D}}(\Theta_{dec}^{(t+1)})  - \nabla \mathcal{L}_{E-D}(\Theta_{dec}^{(t)})\|
    &\leq 2B_{Reconst}\,\beta_{D}\,L_{E}\,\|\Theta^{(t+1)}_{enc}-\Theta^{(t)}_{enc}\| + 2B_{Reconst}\beta_{D}||\Theta^{(t+1)}_{dec}-\Theta^{(t)}_{dec}|| + 4B_{D}B_{Reconst}, 
    \end{align}
where the coupling effects in Encoder-Decoder models may lead to instability in learning. 
\end{itemize}

\end{theorem}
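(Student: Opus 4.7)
The plan is to apply the chain rule to express each partial gradient of $\mathcal{L}_{E-D}$ as a product of two (for the decoder) or three (for the encoder) factors, then decompose the iterate-to-iterate difference via a telescoping identity, and finally bound each resulting term with the matching Lipschitz, smoothness, or boundedness assumption. First, writing $R(\theta) = D\bigl(E(\bar f;\Theta_{enc});\Theta_{dec}\bigr) - f$, the chain rule yields
\[
\nabla_{\Theta_{enc}}\mathcal{L}_{E-D}(\theta) \;=\; \tfrac{2}{N}\, R(\theta)\cdot\nabla_{E}D(E;\Theta_{dec})\cdot\nabla_{\Theta_{enc}}E(\bar f;\Theta_{enc}),
\]
and analogously $\nabla_{\Theta_{dec}}\mathcal{L}_{E-D}(\theta) = \tfrac{2}{N}\, R(\theta)\cdot\nabla_{\Theta_{dec}}D(E;\Theta_{dec})$, with the $1/N$ normalization absorbed into the constants.

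For the encoder bound, I would write the gradient product as $A^{(t)}B^{(t)}C^{(t)}$ with $A=R$, $B=\nabla_E D$, $C=\nabla_{\Theta_{enc}}E$, and use the three-term telescoping identity $A^{(t+1)}B^{(t+1)}C^{(t+1)} - A^{(t)}B^{(t)}C^{(t)} = (A^{(t+1)}-A^{(t)})B^{(t+1)}C^{(t+1)} + A^{(t)}(B^{(t+1)}-B^{(t)})C^{(t+1)} + A^{(t)}B^{(t)}(C^{(t+1)}-C^{(t)})$ followed by the triangle inequality. The three increments are then controlled by distinct assumptions: (i) for $\Delta A$, I would apply the crude residual bound $\|R^{(t+1)}-R^{(t)}\| \le 2B_{Reconst}$ together with $\|B\|\le B_D$ and $\|C\|\le B_E$, which produces the additive constant $4 B_E B_D B_{Reconst}$ after the factor of $2$ from the gradient of the squared loss; (ii) for $\Delta B$, the change in $\nabla_E D$ has two sources---direct drift with $\Theta_{dec}$ giving $\beta_D\|\Delta\Theta_{dec}\|$, and indirect drift through the intermediate code $E(\bar f;\Theta_{enc})$ giving $L_D\|\Delta E\|\le L_D L_E\|\Delta\Theta_{enc}\|$; (iii) for $\Delta C$, the encoder's gradient smoothness yields $\beta_E\|\Delta\Theta_{enc}\|$. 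Collecting coefficients by $\|\Delta\Theta_{enc}\|$ and $\|\Delta\Theta_{dec}\|$ reproduces exactly the encoder bound.

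The decoder bound follows from the two-factor analogue $A^{(t)}B'^{(t)}$ with $B'=\nabla_{\Theta_{dec}}D$: the residual-change term again uses the crude $2B_{Reconst}$ bound to produce the constant $4 B_D B_{Reconst}$, while the change in $B'$ picks up $\beta_D\|\Delta\Theta_{dec}\|$ from direct parameter drift and $\beta_D L_E\|\Delta\Theta_{enc}\|$ from the shift of the decoder's input via the encoder's Lipschitz constant. The main obstacle is precisely tracking this coupling: both networks' parameters move simultaneously, and the decoder's gradient is evaluated at the moving intermediate code $E(\bar f;\Theta_{enc})$, so the encoder's Lipschitz constant $L_E$ must be propagated inside the decoder's smoothness argument. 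This step is what produces the mixed terms $L_D L_E$ in the encoder bound and $\beta_D L_E$ in the decoder bound---precisely the quantitative manifestation of the two-network coupling that the theorem highlights as a source of training instability. A minor secondary subtlety is the looseness of the $2B_{Reconst}$ bound on $\|\Delta R\|$, which is why the additive constants $4B_E B_D B_{Reconst}$ and $4B_D B_{Reconst}$ appear; a sharper bound via Lipschitzness of $D\circ E$ would eliminate them but is unnecessary for the instability message conveyed by the theorem.
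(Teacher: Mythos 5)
Your argument for the third result (the gradient-difference bounds) is correct and follows essentially the same route as the paper: the paper also writes each partial gradient as a product of the residual, the decoder Jacobian, and (for the encoder) the encoder Jacobian, performs the same add-and-subtract/telescoping decomposition into three (resp.\ three) terms, and bounds each increment with the matching assumption --- the crude $2B_{Reconst}$ bound on the residual change producing the additive constants $4B_E B_D B_{Reconst}$ and $4B_D B_{Reconst}$, the splitting of the $\nabla D$ drift into an input-induced part ($L_D L_E\|\Delta\Theta_{enc}\|$, resp.\ $\beta_D L_E\|\Delta\Theta_{enc}\|$) and a parameter-induced part ($\beta_D\|\Delta\Theta_{dec}\|$), and the encoder gradient smoothness giving $\beta_E\|\Delta\Theta_{enc}\|$. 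Your coefficients match the paper's exactly.

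However, the theorem asserts three results and your proposal proves only one of them. You give no argument for the \textbf{Linear Convergence Bounds Under Strong Convexity} (result A), which in the paper is established by the standard one-step gradient-descent expansion $\|\theta^{(t+1)}-\theta^*\|^2 = \|\theta^{(t)}-\theta^*\|^2 - 2\eta\langle\nabla\mathcal{L},\theta^{(t)}-\theta^*\rangle + \eta^2\|\nabla\mathcal{L}\|^2$, lower-bounding the inner product via strong convexity by $\min(\mu_E,\mu_D)\|\theta^{(t)}-\theta^*\|^2$ (conservatively taking the worse constant for the composite), upper-bounding $\|\nabla\mathcal{L}\|^2$ via smoothness by $\max(\beta_E^2,\beta_D^2)\|\theta^{(t)}-\theta^*\|^2$, and choosing the optimal step size $\eta = \min(\mu_E,\mu_D)/\max(\beta_E^2,\beta_D^2)$; the single-network case for H$^3$GNNs then gives the smaller contraction factor $1-\mu_E^2/\beta_E^2$. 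You also give no argument for the \textbf{Proxy Task Loss Bounds} (result B), which the paper obtains by a two-term triangle inequality: $\|D(E(\bar f)) - f\| \leq \|D(E(\bar f)) - D(E(f))\| + \|D(E(f)) - f\| \leq L_E L_D\,\delta + \epsilon_{E-D}$ for the encoder--decoder, and $\|S(\bar f) - T(f)\| \leq \|S(\bar f) - S(f)\| + \|S(f) - T(f)\| \leq L_E\,\delta + \epsilon_{T-S}$ for the teacher--student, where the single Lipschitz factor versus the product $L_E L_D$ is the whole point of the comparison. Without these two arguments the proof of the stated theorem is incomplete.
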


\noindent \textbf{In this section, we first provide the proof of the Gradient Difference Upper Bound:}

\subsection{Encoder Gradient Difference Upper Bound in Encoder-Decoder Model:}
Consider the encoder-decoder model loss function
\begin{equation}
\mathcal{L}_{E-D}(\Theta) = \frac{1}{N}||D\bigl(E(\Bar{f};\Theta_{enc});\Theta_{dec}\bigr) - f||_2^2 \, 
\end{equation} 
Assume the following:
\begin{enumerate}
    \item \textbf{Encoder Smoothness:} 
    \begin{align}
    \|\nabla E(\cdot;\Theta^{(t+1)}_{enc}) - \nabla E(\cdot;\Theta^{(t)}_{enc})\| \le \beta_{E}\,\|\Theta^{(t+1)}_{enc}-\Theta^{(t)}_{enc}\|.
    \end{align}
    \item \textbf{Decoder Gradient Smoothness:} For any fixed input (e.g. $f_{\theta_f}(\overline{x})$),
    \begin{align}
    \Bigl\|\nabla D^{(t+1)}\bigl(E(\cdot; \Theta^{(t+1)}_{enc})) \bigr) - \nabla D\bigl(E(\cdot; \Theta^{(t)}_{enc}); \Theta^{(t)}_{dec}\bigr)\Bigr\| \le \beta_{D}\,\|\Theta^{(t+1)}_{dec}-\Theta^{(t)}_{dec}\| + L_{D}L_{E}\,\|\Theta^{(t+1)}_{enc}-\Theta^{(t)}_{enc}\|,
    \end{align}
    
    \item \textbf{Encoder Gradient Bound:}
    \begin{align}
    \|\nabla E(\cdot;\Theta^{(t)}_{enc})\| \le B_{E}.
    \end{align}
    \item \textbf{Decoder Gradient Bound:}
    \begin{align}
    \Bigl\|\nabla D\bigl(E(\cdot; \Theta^{(t)}_{enc}); \Theta^{(t)}_{dec})\Bigr\| \le B_{D}. 
    \end{align}
    We use the simplified notation in our proof:
    \begin{align}
    \Bigl\|\nabla D^{(t)}\bigl(E(\cdot; \Theta^{(t)}_{enc}))\Bigr\| \le B_{D} 
    \end{align}
    \item \textbf{Reconstruction Error Bound:}
    \begin{align}
    \Bigl\|D\bigl(E(\cdot; \Theta^{(t)}_{enc}); \Theta^{(t)}_{dec} - f\Bigr\| \le B_{Reconst}.
    \end{align}
    \item \textbf{Encoder Lipschitz (with respect to parameters):} There exists $L_{E} > 0$ such that
    \begin{align}
    \|E(\cdot;\Theta^{(t+1)}_{enc}) - E(\cdot;\Theta^{(t)}_{enc})\| \le L_{E}\,\|\Theta^{(t+1)}_{enc}-\Theta^{(t)}_{enc}\|.
    \end{align}
\end{enumerate}

Then, the gradient difference with respect to the encoder parameters between two consecutive iterations is bounded by
\begin{align}
\Bigl\|\nabla \mathcal{L}_{\mathrm{E-D}}(\Theta_{enc}^{(t+1)})  - \nabla \mathcal{L}_{E-D}(\Theta_{enc}^{(t)})\Bigr\|
\le C_1\,\|\Theta^{(t+1)}_{enc}-\Theta^{(t)}_{enc}\| + C_2\,\|\Theta^{(t+1)}_{dec}-\Theta^{(t)}_{dec}\| + C_3,
\end{align}
where
\begin{align}
C_1 = 2B_{Reconst}\Bigl(\beta_E B_D + B_E L_D L_E\Bigr),\qquad
C_2 = 2B_{E}\,\beta_{D}\,B_{Reconst}, \qquad
C_3 = 4B_E B_D B_{Reconst}
\end{align}

\begin{proof}
We start with the expression for the gradient with respect to the encoder parameters at iteration $t$:
\begin{align}
\nabla \mathcal{L}_{\mathrm{E-D}}(\Theta_{enc}^{(t)}) = 2\,\Bigl[D^{(t)}\bigl(E(\cdot;\Theta^{(t)}_{enc})\bigr) - f\Bigr]\,\nabla E(\cdot;\Theta^{(t)}_{enc})\,\nabla D^{(t)}\bigl(E(\cdot;\Theta^{(t)}_{enc})\bigr).
\end{align}
Similarly, at iteration $t+1$,
\begin{align}
\nabla \mathcal{L}_{\mathrm{E-D}}(\Theta_{enc}^{(t+1)}) = 2\,\Bigl[D^{(t+1)}\bigl(E(\cdot;\Theta^{(t+1)}_{enc})\bigr) - f\Bigr]\,\nabla E(\cdot;\Theta^{(t+1)}_{enc})\,\nabla D^{(t+1)}\bigl(E(\cdot;\Theta^{(t+1)}_{enc})\bigr).
\end{align}
Define the difference:
\begin{align}
\Delta_f \triangleq \Bigl\|\nabla \mathcal{L}_{\mathrm{E-D}}(\Theta_{enc}^{(t+1)})  - \nabla \mathcal{L}_{E-D}(\Theta_{enc}^{(t)})\Bigr\|.
\end{align}
Thus,
\begin{align}
\begin{aligned}
\Delta_f = \Bigl\|\, &2\,\nabla E(\cdot;\Theta^{(t+1)}_{enc})\,\nabla D^{(t+1)}\bigl(E(\cdot;\Theta^{(t+1)}_{enc})\bigr)
\Bigl[D^{(t+1)}\bigl(E(\cdot;\Theta^{(t+1)}_{enc})\bigr)- f\Bigr] \\
&\quad -\, 2\,\Bigl[D^{(t)}\bigl(E(\cdot;\Theta^{(t)}_{enc})\bigr) - f\Bigr]\,\nabla E(\cdot;\Theta^{(t)}_{enc})\,\nabla D^{(t)}\bigl(E(\cdot;\Theta^{(t)}_{enc})\bigr)\Bigr\|.
\end{aligned}
\end{align}
To handle this difference, we add and subtract the intermediate term
\begin{align}
2\,\nabla E(\cdot;\Theta^{(t)}_{enc})\,\nabla D^{(t+1)}\bigl(E(\cdot;\Theta^{(t+1)}_{enc})\bigr)
\Bigl[D^{(t+1)}\bigl(E(\cdot;\Theta^{(t+1)}_{enc})\bigr)- f\Bigr],
\end{align}
so that
\begin{align}
\begin{aligned}
\Delta_f = \Bigl\|\, &2\Bigl[\nabla E(\cdot;\Theta^{(t+1)}_{enc}) - \nabla E(\cdot;\Theta^{(t)}_{enc})\Bigr]\,\nabla D^{(t+1)}\bigl(E(\cdot;\Theta^{(t+1)}_{enc})\bigr)
\Bigl(D^{(t+1)}\bigl(E(\cdot;\Theta^{(t+1)}_{enc})\bigr)- f\Bigr) \\
&\quad +\, 2\,\nabla E(\cdot;\Theta^{(t)}_{enc})\,\Bigl\{\nabla D^{(t+1)}\bigl(E(\cdot;\Theta^{(t+1)}_{enc})\bigr) - \nabla D^{(t)}(E(\cdot;\Theta^{(t)}_{enc}))\Bigr\}
\Bigl(D^{(t+1)}\bigl(E(\cdot;\Theta^{(t+1)}_{enc})\bigr)- f\Bigr) \\
&\quad +\, 2\,\nabla E(\cdot;\Theta^{(t)}_{enc})\,\nabla D^{(t)}(E(\cdot;\Theta^{(t)}_{enc}))\Bigl\{
\Bigl(D^{(t+1)}\bigl(E(\cdot;\Theta^{(t+1)}_{enc})\bigr)- f\Bigr) - \Bigl(D^{(t)}\bigl(E(\cdot;\Theta^{(t)}_{enc})\bigr)- f\Bigr)
\Bigr\} \Bigr\|.
\end{aligned}
\end{align}
Applying the triangle inequality yields:
\begin{align}
\Delta_f \le T_1 + T_2 + T_3,
\end{align}
with
\begin{align}
T_1 = 2\,\Bigl\|\nabla E(\cdot;\Theta^{(t+1)}_{enc}) - \nabla E(\cdot;\Theta^{(t)}_{enc})\Bigr\|\,\Bigl\|\nabla D^{(t+1)}\bigl(E(\cdot; \Theta^{(t+1)}_{enc}))\Bigr\|\,\Bigl\|D^{(t+1)}\bigl(E(\cdot; \Theta^{(t+1)}_{enc})) - f\Bigr\|,
\end{align}
and
\begin{align}
T_2 = 2\,\Bigl\|\nabla E(\cdot;\Theta^{(t)}_{enc})\Bigr\|\,\Bigl\|\nabla D^{(t+1)}\bigl(E(\cdot; \Theta^{(t+1)}_{enc})) - \nabla D^{(t)}\bigl(E(\cdot; \Theta^{(t)}_{enc}))\Bigr\|\,\Bigl\|D^{(t+1)}\bigl(E(\cdot; \Theta^{(t+1)}_{enc}) - f\Bigr\|.
\end{align}
and
\begin{align}
T_3 = 2\|\nabla E(\cdot;\Theta^{(t)}_{enc})\,\nabla D^{(t)}(E(\cdot;\Theta^{(t)}_{enc}))\Bigl\{
\Bigl(D^{(t+1)}\bigl(E(\cdot;\Theta^{(t+1)}_{enc})\bigr)- f\Bigr) - \Bigl(D^{(t)}\bigl(E(\cdot;\Theta^{(t)}_{enc})\bigr)- f\Bigr)
\Bigr\} \Bigr\|.
\end{align}

\noindent \textbf{Bounding $T_1$:}  
By the encoder smoothness assumption,
\begin{align}
\|\nabla E(\cdot;\Theta^{(t+1)}_{enc}) - \nabla E(\cdot;\Theta^{(t)}_{enc})\| \le \beta_{E}\,\|\Theta^{(t+1)}_{enc}-\Theta^{(t)}_{enc}\|,
\end{align}
and by the decoder gradient bound,
\begin{align}
\Bigl\|\nabla D^{(t+1)}\bigl(E(\cdot; \Theta^{(t+1)}_{enc}))\Bigr\| \le B_{D},
\end{align}
and the reconstruction error bound,
\begin{align}
\Bigl\|D^{(t+1)}\bigl(E(\cdot; \Theta^{(t+1)}_{enc})) - f\Bigr\| \le B_{Reconst}.
\end{align}
Thus,
\begin{align}
T_1 \le 2\,\beta_{E}\,B_{D}\,B_{Reconst}\,\|\Theta^{(t+1)}_{enc}-\Theta^{(t)}_{enc}\|.
\end{align}

\noindent \textbf{Bounding $T_2$:}  
We now decompose the term
\begin{align}
\nabla D^{(t+1)}\bigl(E(\cdot; \Theta^{(t+1)}_{enc})) - \nabla D^{(t)}\bigl(E(\cdot; \Theta^{(t)}_{enc})).
\end{align}
By adding and subtracting the term $\nabla D^{(t+1)}((E(\cdot; \Theta^{(t)}_{enc})))$, we obtain:
\begin{align}
\begin{aligned}
\Bigl\|\nabla D^{(t+1)}\bigl(E(\cdot; \Theta^{(t+1)}_{enc})) &- \nabla D^{(t)}\bigl(E(\cdot; \Theta^{(t)}_{enc}))\Bigr\| \\
&\le \Bigl\|\nabla D^{(t+1)}\bigl(E(\cdot; \Theta^{(t+1)}_{enc})) - \nabla D^{(t+1)}((E(\cdot; \Theta^{(t)}_{enc})))\Bigr\| \\
&\quad + \Bigl\|\nabla D^{(t+1)}((E(\cdot; \Theta^{(t)}_{enc}))) - \nabla D^{(t)}\bigl(E(\cdot; \Theta^{(t)}_{enc}))\Bigr\|.
\end{aligned}
\end{align}
By the decoder's Lipschitz continuity with respect to its input, we have:
\begin{align}
\Bigl\|\nabla D^{(t+1)}\bigl(E(\cdot; \Theta^{(t+1)}_{enc})) - \nabla D^{(t+1)}((E(\cdot; \Theta^{(t)}_{enc})))\Bigr\|  \le L_{D}\,\|E(\cdot;\Theta^{(t+1)}_{enc}) - E(\cdot;\Theta^{(t)}_{enc})\|,
\end{align}
and by the encoder Lipschitz condition,
\begin{align}
\|E(\cdot;\Theta^{(t+1)}_{enc}) - E(\cdot;\Theta^{(t)}_{enc})\| \le L_{E}\,\|\Theta^{(t+1)}_{enc}-\Theta^{(t)}_{enc}\|.
\end{align}
Thus, the first term is bounded by:
\begin{align}
L_{D}L_{E}\,\|\Theta^{(t+1)}_{enc}-\Theta^{(t)}_{enc}\|.
\end{align}
For the second term, the decoder gradient smoothness gives:
\begin{align}
\Bigl\|\nabla D^{(t+1)}((E(\cdot; \Theta^{(t)}_{enc}))) - \nabla D^{(t)}((E(\cdot; \Theta^{(t)}_{enc})))\Bigr\| \le \beta_{D}\,\|\Theta^{(t+1)}_{dec}-\Theta^{(t)}_{dec}\|.
\end{align}
Thus,
\begin{align}
\Bigl\|\nabla D^{(t+1)}\bigl(E(\cdot; \Theta^{(t+1)}_{enc})) &- \nabla D^{(t)}\bigl(E(\cdot; \Theta^{(t)}_{enc}))\Bigr\| \le L_{D}L_{E}\,\|\Theta^{(t+1)}_{enc}-\Theta^{(t)}_{enc}\| + \beta_{D}\,\|\Theta^{(t+1)}_{dec}-\Theta^{(t)}_{dec}\|.
\end{align}
Now, using the encoder gradient bound, $\|\nabla E(\cdot;\Theta^{(t)}_{enc})\| \le B_{E}$, and the reconstruction error bound $\|D^{(t+1)}\bigl(E(\cdot; \Theta^{(t+1)}_{enc})) - f\| \le B_{Reconst}$, we have:
\begin{align}
T_2 \le 2\,B_{E}\,B_{Reconst} \left(L_{D}L_{E}\,\|\Theta^{(t+1)}_{enc}-\Theta^{(t)}_{enc}\| + \beta_{D}\,\|\Theta^{(t+1)}_{dec}-\Theta^{(t)}_{dec}\|\right).
\end{align}

\noindent \textbf{Bounding $T_3$:}
\begin{align}
T_3 &= 2\|\nabla E(\cdot;\Theta^{(t)}_{enc})\,\nabla D^{(t)}(E(\cdot;\Theta^{(t)}_{enc}))\Bigl\{
\Bigl(D^{(t+1)}\bigl(E(\cdot;\Theta^{(t+1)}_{enc})\bigr)- f\Bigr) - \Bigl(D^{(t)}\bigl(E(\cdot;\Theta^{(t)}_{enc})\bigr)- f\Bigr)
\Bigr\} \Bigr\| \\
&\leq 2\|\nabla E(\cdot;\Theta^{(t)}_{enc})\| \cdot \|\nabla D^{(t)}(E(\cdot;\Theta^{(t)}_{enc}))\| \cdot \|\Bigl(D^{(t+1)}\bigl(E(\cdot;\Theta^{(t+1)}_{enc})\bigr)- f\Bigr) - \Bigl(D^{(t)}\bigl(E(\cdot;\Theta^{(t)}_{enc})\bigr)- f\Bigr)\| \\
&\leq 2B_E \cdot B_D \cdot 2B_{Reconst} \\
&= 4B_E B_D B_{Reconst}
\end{align}

\textbf{Combining $T_1$, $T_2$ and $T_3$:}
\begin{align}
\begin{aligned}
\Delta_f &\le T_1 + T_2 +  T_3\\
&\le 2\beta_{E}B_{D}B_{Reconst}\,\|\Theta^{(t+1)}_{enc}-\Theta^{(t)}_{enc}\| + 2B_{E}B_{Reconst}L_{D}L_{E}\,\|\Theta^{(t+1)}_{enc}-\Theta^{(t)}_{enc}\| + 2B_{E}B_{Reconst}\beta_{D}\,\|\Theta^{(t+1)}_{dec}-\Theta^{(t)}_{dec}\| + 4B_E B_D B_{Reconst}\\
&= \Bigl[2B_{Reconst}\Bigl(\beta_{E}B_{D} + B_{E}L_{D}L_{E}\Bigr)\Bigr]\,\|\Theta^{(t+1)}_{enc}-\Theta^{(t)}_{enc}\| + 2B_{E}B_{Reconst}\beta_{D}\,\|\Theta^{(t+1)}_{dec}-\Theta^{(t)}_{dec}\| + 4B_E B_D B_{Reconst}.
\end{aligned}
\end{align}
Define
\begin{align}
C_1 = 2B_{Reconst}\Bigl(\beta_{E}B_{D} + B_{E}L_{D}L_{E}\Bigr) \quad \text{and} \quad C_2 = 2B_{E}B_{Reconst}\beta_{D} \quad  \text{and} \quad C_3 =4B_E B_D B_{Reconst}.
\end{align}
Then, the final bound is:
\begin{align}
\Bigl\|\nabla \mathcal{L}_{\mathrm{E-D}}(\Theta_{enc}^{(t+1)})  - \nabla \mathcal{L}_{E-D}(\Theta_{enc}^{(t)})\Bigr\|
\le C_1\,\|\Theta^{(t+1)}_{enc}-\Theta^{(t)}_{enc}\| + C_2\,\|\Theta^{(t+1)}_{dec}-\Theta^{(t)}_{dec}\| + 4B_E B_D B_{Reconst}.
\end{align}
This completes the proof for the encoder gradient difference bound.
\end{proof}

\subsection{Decoder Gradient Difference Upper Bound}
For decoder, assume that:
\begin{enumerate}
    \item \textbf{Decoder Lipschitz Continuity:} 
    \begin{align}
    \|D^{(t+1)}- D^{(t)}\| \le L_{D} \,\|\Theta^{(t+1)}_{dec}-\Theta^{(t)}_{dec}\|. 
    \end{align}
    \item \textbf{Decoder Gradient Smoothness:}
    \begin{align}
    \Bigl\|\nabla D^{(t+1)} - \nabla D^{(t)}\Bigr\| \le \beta_{D}\,\|\Theta^{(t+1)}_{dec}-\Theta^{(t)}_{dec}\|.
    \end{align}
    For simpility, we also assume $\beta_{D}$-smooth with respect to its input which helps to keep the proof concise:
    \begin{align}
    \bigl\|\nabla D(f_1; \Theta_{dec}) \;-\; \nabla D(f_2; \Theta_{dec})\bigr\|
      \;\le\;
      \beta_{D}\,\bigl\|f_1 - f_2\bigr\|.
    \end{align}
    \item \textbf{Boundedness:} There exist constants $B_{D}$ and $B_{Reconst}$ such that
    \begin{align}
    \|\nabla D^{(t+1)}\bigl(E(\cdot; \Theta^{(t+1)}_{enc})) \| \le B_{D},
    \end{align}
    and
    \begin{align}
    \Bigl\|D^{(t+1)}\bigl(E(\cdot; \Theta^{(t+1)}_{enc}))  - f\Bigr\| \le B_{Reconst}.
    \end{align}
    \item \textbf{Encoder Influence:} The encoder is $L_{E}$-Lipschitz with respect to its parameters; that is,
    \begin{align}
    \|E(\cdot;\Theta^{(t+1)}_{enc}) - E(\cdot;\Theta^{(t)}_{enc})\| \le L_{E}\,\|\Theta^{(t+1)}_{enc}-\Theta^{(t)}_{enc}\|.
    \end{align}
\end{enumerate}
Then the gradient difference with respect to the decoder parameters satisfies
\begin{equation}
\label{eq:decoder_bound_final}
\|\nabla \mathcal{L}_{\mathrm{E-D}}(\Theta_{dec}^{(t+1)})  - \nabla \mathcal{L}_{E-D}(\Theta_{dec}^{(t)})\|
\leq 2B_{Reconst}\,\beta_{D}\,L_{E}\,\|\Theta^{(t+1)}_{enc}-\Theta^{(t)}_{enc}\| + 2B_{Reconst}\beta_{D}||\Theta^{(t+1)}_{dec}-\Theta^{(t)}_{dec}|| + 4B_{D}B_{Reconst}
\end{equation}

\begin{proof}
We begin with the gradient with respect to the decoder parameters at iteration $t$, so that
\begin{align}
\nabla \mathcal{L}_{\mathrm{E-D}}(\Theta_{dec}^{(t)})
~=~
2\,\Bigl[D^{(t)}\bigl(E(\cdot;\Theta^{(t)}_{enc})\bigr) - f\Bigr]\,\nabla\,D^{(t)}\bigl(E(\cdot;\Theta^{(t)}_{enc})\bigr).
\end{align}
Similarly, at iteration $t+1$,
\begin{align}
\nabla \mathcal{L}_{\mathrm{E-D}}(\Theta_{dec}^{(t+1)})
~=~
2\,\Bigl[D^{(t+1)}\bigl(E(\cdot;\Theta^{(t+1)}_{enc})\bigr) - f\Bigr]\,\nabla D^{(t+1)}\bigl(E(\cdot;\Theta^{(t+1)}_{enc})\bigr).
\end{align}

Define the difference:
\begin{align}
\Delta_g \triangleq \Bigl\|\nabla \mathcal{L}_{\mathrm{E-D}}(\Theta_{dec}^{(t+1)})
- \nabla \mathcal{L}_{\mathrm{E-D}}(\Theta_{dec}^{(t)})\Bigr\|.
\end{align}
Thus,
\begin{align}
\begin{aligned}
\Delta_g 
&= \Bigl\|2\Bigl[D^{(t+1)}\bigl(E(\cdot;\Theta^{(t+1)}_{enc})\bigr) - f\Bigr]\,\nabla D^{(t+1)}\bigl(E(\cdot;\Theta^{(t+1)}_{enc})\bigr) \\
&\quad\quad\quad\quad -\, 2\Bigl[D^{(t)}\bigl(E(\cdot;\Theta^{(t)}_{enc})\bigr) - f\Bigr]\,\nabla D^{(t)}\bigl(E(\cdot;\Theta^{(t)}_{enc})\bigr)\Bigr\|.
\end{aligned}
\end{align}
To proceed, we add and subtract the intermediate term
\begin{align}
2\Bigl[D^{(t+1)}\bigl(E(\cdot;\Theta^{(t+1)}_{enc})\bigr) - f\Bigr]\,\nabla D^{(t+1)}\bigl(E(\cdot;\Theta^{(t)}_{enc})\bigr)
\end{align}
to obtain:
\begin{align}
\begin{aligned}
\Delta_g = \Bigl\|\, &2\Bigl[D^{(t+1)}\bigl(E(\cdot;\Theta^{(t+1)}_{enc})\bigr) - f\Bigr]\Bigl(\nabla D^{(t+1)}\bigl(E(\cdot;\Theta^{(t+1)}_{enc})\bigr) - \nabla D^{(t+1)}\bigl(E(\cdot;\Theta^{(t)}_{enc})\bigr)\Bigr)\\
&+ 2\Bigl(\bigl[D^{(t+1)}\bigl(E(\cdot;\Theta^{(t+1)}_{enc})\bigr) - f\bigr]\Bigr) \Bigl(\nabla D^{(t+1)}\bigl(E(\cdot;\Theta^{(t)}_{enc})\bigr ) - \nabla D^{(t)}\bigl(E(\cdot;\Theta^{(t)}_{enc})\Bigr\| \\
&+ 2\Bigl(\bigl[D^{(t+1)}\bigl(E(\cdot;\Theta^{(t+1)}_{enc})\bigr) - f\bigr] - \bigl[D^{(t)}\bigl(E(\cdot;\Theta^{(t)}_{enc})\bigr) - f\bigr]\Bigr) \nabla D^{(t)}\bigl(E(\cdot;\Theta^{(t)}_{enc})\bigr) \Bigr\|.
\end{aligned}
\end{align}
Applying the triangle inequality, we have:
\begin{align}
\Delta_g \le T_A + T_B + T_c,
\end{align}
where
\begin{align}
T_A = 2\,\Bigl\|\bigl[D^{(t+1)}\bigl(E(\cdot; \Theta^{(t+1)}_{enc}))  - f\bigr]\Bigl(\nabla D^{(t+1)}\bigl(E(\cdot;\Theta^{(t+1)}_{enc})\bigr) - \nabla D^{(t+1)}\bigl(E(\cdot;\Theta^{(t)}_{enc})\bigr)\Bigr)\Bigr\|,
\end{align}
and
\begin{align}
T_B = 2\,\Bigl\|\Bigl(\bigl[D^{(t+1)}\bigl(E(\cdot;\Theta^{(t+1)}_{enc})\bigr) - f\bigr]\Bigr) \Bigl(\nabla D^{(t+1)}\bigl(E(\cdot;\Theta^{(t)}_{enc})\bigr ) - \nabla D^{(t)}\bigl(E(\cdot;\Theta^{(t)}_{enc})\Bigr\|.
\end{align}
and
\begin{align}
T_C = 2\Bigl(\bigl[D^{(t+1)}\bigl(E(\cdot;\Theta^{(t+1)}_{enc})\bigr) - f\bigr] - \bigl[D^{(t)}\bigl(E(\cdot;\Theta^{(t)}_{enc})\bigr) - f\bigr]\Bigr) \nabla D^{(t)}\bigl(E(\cdot;\Theta^{(t)}_{enc})\bigr) \Bigr\|
\end{align}

\textbf{Bounding $T_A$:}  
Using the decoder gradient bound, we have
\begin{align}
\Bigl\|\nabla D^{(t+1)}\bigl(E(\cdot;\Theta^{(t+1)}_{enc})\bigr) - \nabla D^{(t+1)}\bigl(E(\cdot;\Theta^{(t)}_{enc})\bigr)\Bigr\| \le \beta_{D}\,\Bigl\|E(\cdot;\Theta^{(t+1)}_{enc}) - E(\cdot;\Theta^{(t)}_{enc})\Bigr\|.
\end{align}
By the encoder Lipschitz property,
\begin{align}
\Bigl\|E(\cdot;\Theta^{(t+1)}_{enc}) - E(\cdot;\Theta^{(t)}_{enc})\Bigr\| \le L_{E}\,\|\Theta^{(t+1)}_{enc}-\Theta^{(t)}_{enc}\|.
\end{align}
Also, by the reconstruction error bound,
\begin{align}
\Bigl\|D^{(t+1)}\bigl(E(\cdot; \Theta^{(t+1)}_{enc}))  - f\Bigr\| \le B_{Reconst}.
\end{align}
Therefore,
\begin{align}
T_A \le 2B_{Reconst}\,\beta_{D}\,L_{E}\,\|\Theta^{(t+1)}_{enc}-\Theta^{(t)}_{enc}\|.
\end{align}

\textbf{Bounding $T_B$:}  
For $T_B$, we have
\begin{align}
\Bigl\|\nabla D^{(t+1)}\bigl(E(\cdot;\Theta^{(t)}{enc})\bigr) - \nabla D^{(t)}\bigl(E(\cdot;\Theta^{(t)}{enc})\bigr)\Bigr\| \le \beta_D\Bigl\|\Theta^{(t+1)}_{dec}-\Theta^{(t)}_{dec}\Bigr\|
\end{align}
Since:
\begin{align}
\Bigl\|D^{(t+1)}\bigl(E(\cdot; \Theta^{(t+1)}{enc}))  - f\Bigr\| \le B_{Reconst}
\end{align}

Thus,
it follows that
\begin{align}
T_B \le 2B_{Reconst}\beta_{D}||\Theta^{(t+1)}_{dec}-\Theta^{(t)}_{dec}||
\end{align}

\textbf{Bounding $T_C$:} 
We have:

\begin{align}
\begin{aligned}
&\Bigl\|\bigl[D^{(t+1)}\bigl(E(\cdot; \Theta^{(t+1)}{enc}))  - f\bigr] - \bigl[D^{(t)}(E(\cdot;\Theta^{(t)}{enc})) - f\bigr]\Bigr\|
\le 2B_{Reconst}
\end{aligned}
\end{align}

and

\begin{align}
    \|\nabla D^{(t+1)}\bigl(E(\cdot; \Theta^{(t+1)}_{enc})) \| \le B_{D},
    \end{align}

Thus:
\begin{align}
T_C \le 4B_{D}B_{Reconst}
\end{align}

\textbf{Combining $T_A$, $T_B$ and $T_C$:}  
We then have:
\begin{align}
\begin{aligned}
\Delta_g &\le T_A + T_B + T_C\\
&\le 2B_{Reconst}\,\beta_{D}\,L_{E}\,\|\Theta^{(t+1)}_{enc}-\Theta^{(t)}_{enc}\| + 2B_{Reconst}\beta_{D}||\Theta^{(t+1)}_{dec}-\Theta^{(t)}_{dec}|| + 4B_{D}B_{Reconst}
\end{aligned}
\end{align}
This completes the proof for the decoder-side gradient difference bound.
\end{proof}

\setcounter{section}{2}
\section{Proof of Proxy Task Loss Bounds}
\label{app:proof2}

\begin{theorem}
\textbf{Proxy Task Loss Bounds} under a Lipschitz-dependent assumption between the masked graph signal and the raw graph signal, $\|\bar{f}-f\| \leq \delta$. For our \method, 
    \begin{align}
        \|S(\bar{f};\Phi) - T(f;\Psi)\| \leq L_E \cdot \delta + \epsilon_{T-S}. 
    \end{align}
    For the encoder-decoder models, 
    \begin{equation}
    \|D\bigl(E(\bar{f};\Phi_{enc});\Theta_{dec}\bigr) - f\| \leq L_E\cdot L_D \cdot \delta + \epsilon_{E-D}.
    \end{equation}
    W.L.O.G., assume $\epsilon_{E-D}=\epsilon_{T-S}$, our \method has a smaller error upper bound, indicating that our teacher–student model is closer to the optimal solution $\theta^*$ during training, which in turn implies that its parameter updates are more stable and its convergence speed is faster (as shown in the first result above).
\end{theorem}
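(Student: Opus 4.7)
The plan is to establish each of the two inequalities by a single triangle-inequality decomposition that separates a Lipschitz-in-input contribution from an ``approximation error'' contribution, then to place the two bounds side by side and read off the comparison. Because the teacher/student in \method and the encoder/decoder in the baseline share a common encoder architecture, the $L_E$-Lipschitz hypothesis can be applied directly to $S(\cdot;\Phi)$ and $T(\cdot;\Psi)$ as well as to $E(\cdot;\Theta_{enc})$; this is the key shared ingredient.

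For the \method bound, I would insert the pivot $S(f;\Phi)$ and write
\begin{equation}
\|S(\bar{f};\Phi) - T(f;\Psi)\| \;\leq\; \|S(\bar{f};\Phi) - S(f;\Phi)\| \;+\; \|S(f;\Phi) - T(f;\Psi)\|.
\end{equation}
The first term is bounded by $L_E\cdot\|\bar{f}-f\|\leq L_E\cdot\delta$ via the Lipschitz-in-input assumption on the encoder (which the student inherits). The second term is exactly the approximation error achieved by the teacher--student pair on the unmasked input, so by definition it is $\leq \epsilon_{T-S}$, yielding the stated bound $L_E\cdot\delta+\epsilon_{T-S}$. For the encoder--decoder bound I would use the pivot $D\bigl(E(f;\Phi_{enc});\Theta_{dec}\bigr)$:
\begin{align}
\|D\bigl(E(\bar{f};\Phi_{enc});\Theta_{dec}\bigr) - f\|
&\leq \|D\bigl(E(\bar{f};\Phi_{enc});\Theta_{dec}\bigr) - D\bigl(E(f;\Phi_{enc});\Theta_{dec}\bigr)\| \\
&\quad + \|D\bigl(E(f;\Phi_{enc});\Theta_{dec}\bigr) - f\|.
\end{align}
Applying $L_D$-Lipschitzness of the decoder in its input to the first term gives $L_D\|E(\bar{f};\cdot)-E(f;\cdot)\|$, and applying $L_E$-Lipschitzness of the encoder once more yields $L_E\cdot L_D\cdot\delta$; the second term is the composite approximation error $\epsilon_{E-D}$, producing $L_E\cdot L_D\cdot\delta+\epsilon_{E-D}$. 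Both decompositions reduce to standard triangle-inequality arguments, so the calculation itself is routine.

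The comparison step then consists of setting $\epsilon_{E-D}=\epsilon_{T-S}$ (the stated W.L.O.G.~assumption) and observing that the \method bound scales as $L_E$ whereas the encoder--decoder bound scales as $L_E\cdot L_D$; whenever $L_D\geq 1$ (the usual regime for a nonlinear decoder that must invert a contractive encoder), the \method bound is strictly tighter. The conceptual point I would emphasize in the write-up is that the extra $L_D$ factor in the baseline is the quantitative fingerprint of having to reconstruct in the raw input space rather than in the latent space, and it is what drives the slower convergence shown in Result A. The main obstacle, and therefore where I would spend the most care, is justifying the Lipschitz-in-input bound on $S(\cdot;\Phi)$ globally: the joint structural encoding in Eqn.~\ref{eq:encoding} composes a Transformer with WGCN blocks and an MLP, and one must argue that the combined module inherits an $L_E$ input-Lipschitz constant under the boundedness assumptions already granted (bounded gradients, normalized attention). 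Once this is in hand, the rest is two applications of the triangle inequality.
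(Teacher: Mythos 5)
Your proposal is correct and follows essentially the same argument as the paper: the same triangle-inequality decompositions with pivots $S(f;\Phi)$ and $D\bigl(E(f;\Phi_{enc});\Theta_{dec}\bigr)$, the same application of the Lipschitz constants to the masked-vs-unmasked difference, and the same identification of the remaining term with the approximation error. Your added remark that the comparison is only meaningful when $L_D \geq 1$ is a caveat the paper leaves implicit, but it does not change the route of the proof.
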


\begin{proof}  
\begin{align}
||D\bigl(E(\bar{f};\Phi_{enc});\Theta_{dec}\bigr) - f|| &\leq
||f - D\bigl(E(f;\Phi_{enc});\Theta_{dec}\bigr)||
+||D\bigl(E(f;\Phi_{enc});\Theta_{dec}\bigr) - D\bigl(E(\bar{f};\Phi_{enc});\Theta_{dec}\bigr)|| \\
&\leq \epsilon_{E-D}+ L_{E}L_{D}||f - \bar{f}|| \\
&\leq \epsilon_{E-D} + L_E\cdot L_D \cdot \delta
\end{align}

\begin{align}
\left|\left|S(\bar{f};\Phi) - T(f;\Psi)\right|\right| &\leq \left|\left|S(\bar{f};\Phi)  - S(f;\Phi) \right|\right| + \left|\left|S(f;\Phi)  - T(f;\Psi)\right|\right| \\
&\leq L_{E}\left|\left|\bar{f} - f\right|\right| + \epsilon_{T-S} \\
&\leq L_{E}\delta + \epsilon_{T-S}
\end{align}

\end{proof}  

\setcounter{section}{3}
\section{Proof of Linear Convergence Bounds}
\label{app:proof3}

\subsection{Encoder-Decoder:}
\begin{theorem}
\textbf{Linear Convergence Bounds Under Strong Convexity.}
    For our \method, 
    \begin{align}
        \|\Phi^{(t+1)}-\Phi^*\|^2 &\leq (1-\frac{\mu^2_E}{\beta^2_E})\cdot \|\Phi^{(t)} - \Phi^*\|^2 
    \end{align}
    where $\alpha$ is the momentum parameter (Eqn.~\ref{eq:ema}). 
    For the encoder-decoder models, 
    \begin{align}
        \|\theta^{(t+1)} - \theta^*\|^2 &\leq \left(1 - \frac{\min(\mu^2_E, \mu^2_D)}{\max(\beta^2_E, \beta^2_D)}\right) \|\theta^{(t)} - \theta^*\|^2 
    \end{align}
    from which we can see our \method converges to the optimal solution $\Phi^*$ faster than the encoder-decoder counterpart to their optimal solutions $\Theta^*$ due to a smaller contraction factor $\left(1 - \frac{\mu^2_E}{\beta^2_E}\right)< \left(1 - \frac{\min(\mu^2_E, \mu^2_D)}{\max(\beta^2_E, \beta^2_D)}\right)$. This implies that \method can achieve a faster convergence.
\end{theorem}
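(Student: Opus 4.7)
\bigskip

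\noindent\textbf{Proof plan for the Linear Convergence Bounds (Result A).}
The plan is to invoke the standard textbook analysis of gradient descent under strong convexity and gradient smoothness, apply it separately to the student objective of \method and to the joint encoder--decoder objective, and then compare the resulting contraction factors. For \method, I would first write the student update as $\Phi^{(t+1)} = \Phi^{(t)} - \eta\,\nabla \mathcal{L}(\Phi^{(t)})$ on the loss in Eqn.~\ref{eq:loss}, and expand
\begin{align*}
\|\Phi^{(t+1)}-\Phi^*\|^2 = \|\Phi^{(t)}-\Phi^*\|^2 - 2\eta\,\langle \nabla \mathcal{L}(\Phi^{(t)}),\,\Phi^{(t)}-\Phi^*\rangle + \eta^2\,\|\nabla \mathcal{L}(\Phi^{(t)})\|^2.
\end{align*}
The cross term is bounded from below by $\mu_E\|\Phi^{(t)}-\Phi^*\|^2$ via the $\mu_E$-strong-convexity assumption on the encoder parameters, and the squared gradient is bounded from above by $\beta_E^2 \|\Phi^{(t)}-\Phi^*\|^2$ by invoking $\beta_E$-smoothness at the optimum (where $\nabla \mathcal{L}(\Phi^*) = 0$). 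Choosing the step size $\eta = \mu_E/\beta_E^2$ then collapses the bound to the claimed contraction factor $\bigl(1-\mu_E^2/\beta_E^2\bigr)$. The teacher parameters $\Psi$ do not appear directly in the contraction: they are a passive EMA (Eqn.~\ref{eq:ema}), and the choice of $\alpha$ only guarantees $\Psi_I \approx \Phi_I$, i.e. asymptotic agreement of the target with the student optimum, not per-step contraction.

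Next, I would repeat exactly the same three-term expansion for the encoder--decoder loss $\mathcal{L}_{E\text{-}D}(\theta)$ in Eqn.~\ref{eq:mae}, but now with $\theta=(\Theta_{enc},\Theta_{dec})$ as a single block. The joint gradient satisfies a smoothness inequality with constant at most $\max(\beta_E,\beta_D)$, because the composite loss's Hessian blockwise inherits the smoothness of the worse (larger-$\beta$) component, and a strong-convexity inequality with constant at least $\min(\mu_E,\mu_D)$ for the analogous blockwise reason. Substituting these joint constants into the same recursion and choosing step size $\eta = \min(\mu_E,\mu_D)/\max(\beta_E^2,\beta_D^2)$ yields the claimed $\bigl(1-\min(\mu_E^2,\mu_D^2)/\max(\beta_E^2,\beta_D^2)\bigr)$ contraction. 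A simple monotonicity argument then shows $\mu_E^2/\beta_E^2 \ge \min(\mu_E^2,\mu_D^2)/\max(\beta_E^2,\beta_D^2)$, with strict inequality whenever $\mu_D<\mu_E$ or $\beta_D>\beta_E$, so the \method contraction is tighter and convergence is faster.

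The main obstacle will be justifying the composite strong-convexity and smoothness constants $\min(\mu_E,\mu_D)$ and $\max(\beta_E,\beta_D)$ for the encoder--decoder reconstruction loss. Strictly speaking, composing an encoder that is $\mu_E$-strongly convex in $\Theta_{enc}$ with a decoder that is $\mu_D$-strongly convex in $\Theta_{dec}$ inside a squared-error reconstruction does not automatically produce a jointly strongly convex loss with these clean constants: the cross-parameter Hessian blocks introduce coupling terms (precisely the coupling already exposed in Result~C, Eqns.~\ref{eq:encoder_bound}--\ref{eq:decoder_bound}). I would handle this by stating the joint convexity/smoothness as an assumption at the composite-loss level (consistent with how the theorem is phrased), and noting that the coupling bounds from Result~C provide the technical justification that the cross terms do not overwhelm the dominant block contributions. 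The teacher EMA requires no extra treatment in the contraction argument itself; one only needs a short remark that for $\alpha$ sufficiently close to $1$ over $I$ epochs, $\Psi^{(t)}$ remains in a neighborhood of $\Phi^{(t)}$ small enough that $\nabla \mathcal{L}(\Phi^{(t)})$ continues to point toward $\Phi^*$, so the single-player strong-convexity inequality above remains valid throughout training.
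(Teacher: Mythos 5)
Your plan follows essentially the same route as the paper's proof: the standard three-term expansion of $\|\theta^{(t+1)}-\theta^*\|^2$ under gradient descent, lower-bounding the cross term via strong convexity, upper-bounding the squared gradient via smoothness at the optimum (where the gradient vanishes), optimizing the step size to $\eta=\mu/\beta^2$, and asserting the composite constants $\min(\mu_E,\mu_D)$ and $\max(\beta_E,\beta_D)$ for the joint encoder--decoder block before the final monotonicity comparison. If anything, you are more candid than the paper about the weakest link --- the paper likewise simply asserts that the composition "can only guarantee" the $\min/\max$ constants rather than deriving them from the blockwise assumptions --- so your proposal matches the published argument.
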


\begin{proof}  
From above, We can get the smoothness assumptions:

\begin{align}
    \|\nabla E(\cdot;\Theta^{(t+1)}_{enc}) - \nabla E(\cdot;\Theta^{(t)}_{enc})\| \le \beta_{E}\,\|\Theta^{(t+1)}_{enc}-\Theta^{(t)}_{enc}\|.
\end{align}

and 
\begin{align}
    \Bigl\|\nabla D^{(t+1)} - \nabla D^{(t)}\Bigr\| \le \beta_{D}\,\|\Theta^{(t+1)}_{dec}-\Theta^{(t)}_{dec}\|.
\end{align}

Besides, we also assume strong convexity:
\begin{enumerate}
    \item $\mu_E$-strong convexity of encoder:
    \begin{equation}
        \langle \nabla E(\bar{f};\Theta_{enc}^{(t+1)}) - \nabla E(\bar{f};\Theta_{enc}^{(t)}), \Theta_{enc}^{(t+1)} - \Theta_{enc}^{(t)} \rangle \geq \mu_{E} \cdot \|\Theta_{enc}^{(t+1)} - \Theta_{enc}^{(t)}\|^2
    \end{equation}
    
    \item $\mu_D$-strong convexity of decoder:
    \begin{equation}
        \langle \nabla D(\bar{f};\Theta_{dec}^{(t+1)}) - \nabla D(\bar{f};\Theta_{dec}^{(t)}), \Theta_{dec}^{(t+1)} - \Theta_{dec}^{(t)} \rangle \geq \mu_{D} \cdot \|\Theta_{dec}^{(t+1)} - \Theta_{dec}^{(t)}\|^2
    \end{equation}
\end{enumerate}
When combining an encoder and decoder, the overall strong convexity constant is often at most $\min(\mu_E,\mu_D)$ in a conservative sense.

\noindent Then for the encoder--decoder model, we define $\theta = \bigl(\Theta_{enc}, \Theta_{dec}\bigr)$
for simplicity, where $\theta$ is used as a generic parameter vector for the entire model. The gradient descent update is given by:
\begin{equation}
    \theta_{t+1} = \theta_t - \eta\nabla L_{ED}(\theta_t)
\end{equation}

Following the gradient analysis:
\begin{align}
    \|\theta_{t+1} - \theta^*\|^2 &= \|(\theta_t - \eta\nabla L_{ED}(\theta_t)) - \theta^*\|^2 \\
    &= \|\theta_t - \theta^*\|^2 - 2\eta\langle\nabla L_{ED}(\theta_t), \theta_t - \theta^*\rangle + \eta^2\|\nabla L_{ED}(\theta_t)\|^2
\end{align}

For $\langle\nabla L_{ED}(\theta_t), \theta_t - \theta^*\rangle$:

Since \textbf{$\mu$-strongly convex}, the following inequality holds:
\begin{align}
L(\theta') \geq L(\theta) + \nabla L(\theta)^\top (\theta' - \theta) + \frac{\mu}{2} \|\theta' - \theta\|^2.
\end{align}

Let $ \theta^* $ denote the global optimum of $ L(\theta) $, i.e., 
\begin{align}
\theta^* = \arg\min_{\theta} L(\theta).
\end{align}
then:
\begin{align}
\nabla L(\theta^*) = 0.
\end{align}
Substituting $ \theta' = \theta^* $ into the strong convexity definition, we obtain:
\begin{align}
L(\theta^*) \geq L(\theta_t) + \nabla L(\theta_t)^\top (\theta^* - \theta_t) + \frac{\mu}{2} \|\theta^* - \theta_t\|^2.
\end{align}
\noindent Rearranging the terms, we have:
\begin{align}
L(\theta^*) - L(\theta_t) \geq \nabla L(\theta_t)^\top (\theta^* - \theta_t) + \frac{\mu}{2} \|\theta^* - \theta_t\|^2.
\end{align}
\noindent Since $ \theta^* $ is the global minimum, it follows that $ L(\theta^*) \leq L(\theta_t) $. Therefore:
\begin{align}
L(\theta^*) - L(\theta_t) \leq 0.
\end{align}
\noindent Combining the two inequalities:
\begin{align}
0 \geq \nabla L(\theta_t)^\top (\theta^* - \theta_t) + \frac{\mu}{2} \|\theta^* - \theta_t\|^2.
\end{align}

\begin{align}
\nabla L(\theta_t)^\top (\theta_t - \theta^*) \geq \frac{\mu}{2} \|\theta_t - \theta^*\|^2.
\end{align}

\noindent In general, the encoder and decoder are each $ \mu_E $-strongly convex and $ \mu_D $-strongly convex with respect to their parameters, respectively, then the composition can only guarantee a smaller strong convexity coefficient $\min(\mu_E, \mu_D)$ in the worst case, then:

$$\langle\nabla L_{ED}(\theta_t), \theta_t - \theta^*\rangle \geq \min(\mu_E, \mu_D)\|\theta_t - \theta^*\|^2$$

\noindent Similarly, for $\|\nabla L_{ED}(\theta_t)\|^2$,
since $\nabla L_{ED}(\theta^*) = 0$, then
\begin{align}
\|\nabla L_{ED}(\theta)\|
&= \|\nabla L_{ED}(\theta) - \nabla L_{ED}(\theta^*)\|
\;\le\; \beta\,\|\theta - \theta^*\|.\\
\|\nabla L_{ED}(\theta)\|^2
\;&\le\;
\beta^2\,\|\theta - \theta^*\|^2.
\end{align}
\noindent In Encoder-Decoder, we have two sets of parameters $(\Theta_{enc},\Theta_{dec})$ and we typically argue that
\begin{equation}
L_{ED}(\theta)
\;\text{is at most}\;
(\beta_{E}\text{-smooth}) \times (\beta_{D}\text{-smooth}),
\end{equation}
For simplicity, let $L_{ED}(\theta)$ is $\max(\beta_{E},\beta_{D})$-smooth:
$$\|\nabla L_{ED}(\theta_t)\|^2 \leq \max(\beta_{E}^2, \beta_{D}^2)\|\theta_t - \theta^*\|^2$$

Then we can get:
\begin{equation}
\|\theta_{t+1} - \theta^*\|^2 \leq (1 - 2\eta\min(\mu_E, \mu_D) + \eta^2\max(\beta_{E}^2, \beta_{D}^2))\|\theta_t - \theta^*\|^2
\end{equation}

We want to find the minimum of $(1 - 2\eta\min(\mu_E, \mu_D) + \eta^2\max(\beta_{E}^2, \beta_{D}^2))$:

\begin{equation}
-2\min(\mu_E, \mu_D) + 2\eta\max(\beta_{E}^2, \beta_{D}^2) = 0
\end{equation}

\begin{equation}\eta = \frac{\min(\mu_E, \mu_D)}{\max(\beta_{E}^2, \beta_{D}^2)}\end{equation}

With optimal learning rate $\eta = \frac{\min(\mu_E, \mu_D)}{\max(\beta_{E}, \beta_{D})}$, we obtain:
\begin{equation}
    \|\theta_{t+1} - \theta^*\|^2 \leq (1 - \frac{\min(\mu^2_E, \mu^2_D)}{\max(\beta^2_{E}, \beta^2_{D})})\|\theta_t - \theta^*\|^2 
\end{equation}

\subsection{H$^3$GNN:}
For our method, analyzing one step:
\begin{equation}
    \|\Phi_{t+1} - \Phi^*\|^2 = \|(\Phi_t - \Tilde{\eta} \nabla L_{TS}(\Phi_t)) - \Phi^*\|^2
\end{equation}

\noindent Similarly as above, with optimal learning rate $\Tilde{\eta} = \mu_E/\beta_{E}$:
\begin{equation}
    \|\Phi_{t+1} - \Phi^*\|^2 \leq (1 - \frac{\mu^2_E}{\beta^2_{E}})\|\Phi_t - \Phi^*\|^2 
\end{equation}

\noindent Clearly, our proposed method achieves better convergence because:
\begin{equation}
    \frac{\mu^2_E}{\beta^2_{E}} > \frac{\min(\mu^2_E,\mu^2_D)}{\max(\beta^2_{E},\beta^2_{D})}
\end{equation}

This inequality holds because:
\begin{enumerate}
    \item $\mu^2_E \geq \min(\mu^2_E,\mu^2_D)$
    \item $\beta^2_{E} \leq \max(\beta^2_{E},\beta^2_{D})$
\end{enumerate}

\noindent Obviously, our model yields a faster convergence rate.
\end{proof}  

\setcounter{section}{4}
\section{Datasets Statistics}
\label{app:dataset}

We provide the deatils of datasets used in our experiment here. The homophily ratio, denoted as homo, represents the proportion of edges that connect two nodes within the same class out of all edges in the graph. Consequently, graphs with a strong homophily ratio close to 1, whereas those with a ratio near 0 exhibit strong heterophily.

\begin{table}[H]
\centering
  \caption{Datasets statistics.}
    \label{tab:dataset}
  \begin{tabular}{cccccc}
    \toprule
    \toprule
    Datasets&Node&Edges&Feats&Classes&Homo\\
    \midrule
    Cornell & 183 & 295 & 1,703 & 5 & 0.3\\
    Texas & 183 & 309 & 1,703 & 5 & 0.11\\
    Wisconsin & 251 & 499 & 1,703 & 5 & 0.21\\
    Actor & 7,600 & 29,926 & 932 & 5 & 0.22\\
    \midrule
    Cora & 2708 & 10,556 & 1,433 &7 & 0.81\\
    CiteSeer & 3,327 & 9,104 & 3,703 & 6 & 0.74\\
    PubMed & 19,717 & 88,648 &500 &3 & 0.8\\
  \bottomrule
  \bottomrule
\end{tabular}
\end{table}

\setcounter{section}{5}
\section{Performance}
\label{app:plot}
In this section, we present a radar plot to illustrate the advantages of our proposed H$^3$ GNN compared to major baselines across all datasets as shown in Figure \ref{fig:radar}. This figure clearly demonstrates our model's effectiveness.

\begin{figure*}[t]
    \centering
    \includegraphics[width=0.5\textwidth]{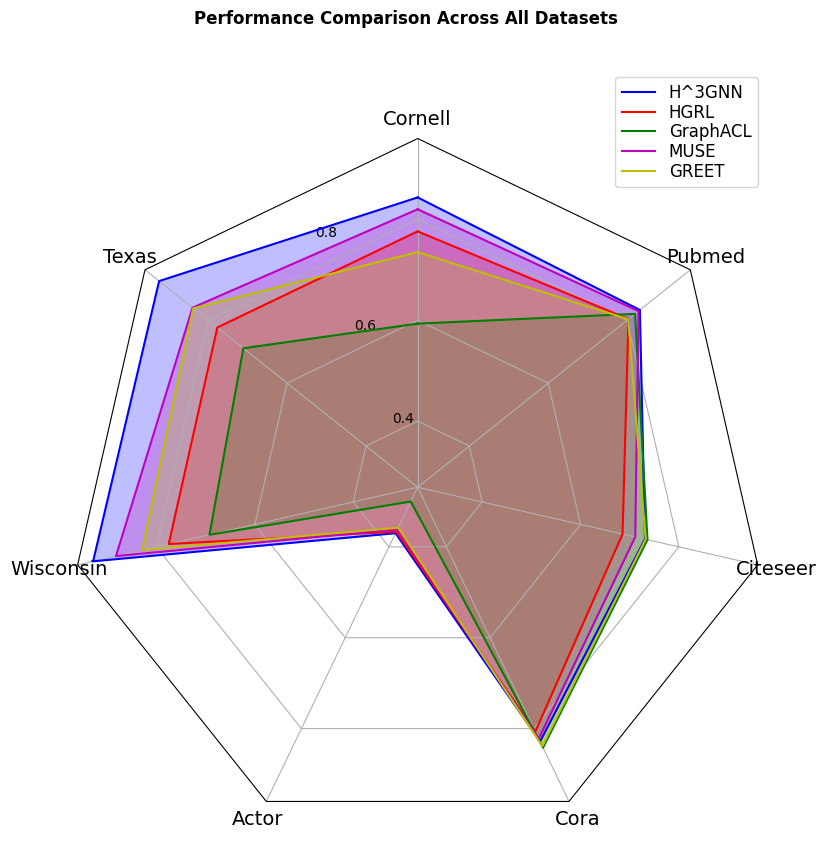}
    \caption{Performance comparison across all datasets}
    \label{fig:radar}
\end{figure*}

\setcounter{section}{6}
\section{Heterophily and Homophily in Graphs}
\label{app:pattern}
\subsection{Datasets Descriptions}
We provide a basic introduction \cite{pei2020geom} and T-SNE  visualizations of four heterophilic datasets which show complicated mix patterns in this section. 

\noindent \textbf{WebKB}. The WebKB1 dataset is a collection of web pages. Cornell, Wisconsin and Texas are three sub-datasets of it. Nodes represent web pages and edges denote hyperlinks between them. The node features are bag-of-words representations of the web pages, which are manually categorized into five classes: student, project, course, staff, and faculty.

\noindent \textbf{Actor Co-occurrence Network}. This dataset is derived from the film-director-actor-writer network. In this network, each node corresponds to an actor, and an edge between two nodes indicates that the actors co-occur on the same Wikipedia page. The node features consist of keywords extracted from these Wikipedia pages, and the actors are classified into five categories based on the content of their pages.

\subsection{Pattern Analysis}
\textbf{Wisconsin, Texas and Cornell}: These three datasets are relatively small and exhibit high heterophily. In the raw feature visualizations (left), nodes of different labels are highly mixed, with significant overlap between categories. After applying \method, the right-side visualizations reveal a more distinct clustering structure, where nodes of the same label are more compactly grouped. For instance, in Texas and Cornell, purple nodes appear more concentrated, and red nodes are better distinguished from other categories, indicating that the model effectively captures the structural patterns. In Wisconsin, the node clusters become more distinguishable, with clearer boundaries between different categories. This demonstrates the model’s ability to learn meaningful representations that enhance classification and clustering tasks.

\noindent \textbf{Actor}: This dataset contains a large number of nodes with an imbalanced label distribution (with red nodes being dominant). In the raw feature space (left), although red nodes are mainly centered, other colored nodes remain scattered without clear boundaries. Notably, the outer ring of nodes effectively represents the mixed structural pattern, which accounts for the relatively low accuracy observed in both node classification and node clustering tasks across all models. In the \method embedding space (right), red nodes are more tightly clustered, while nodes of other labels form relatively well-separated subclusters. This suggests that the model improves class separation and enhances discrimination among different node categories.

Overall, these visualizations demonstrate that in the \method embedding space, nodes of different categories form more distinguishable clusters compared to the raw feature space. This intuitively explains why our model achieves great performance in both node classification and node clustering tasks. Furthermore, it highlights the model’s strong representation learning capability across various graph structures, whether homophilic or heterophilic.

\begin{figure}[h]
    \centering
    \vspace{-0.1in}
    \hspace{-0.3in}
    \subfloat{
        \includegraphics[width=0.5\textwidth]{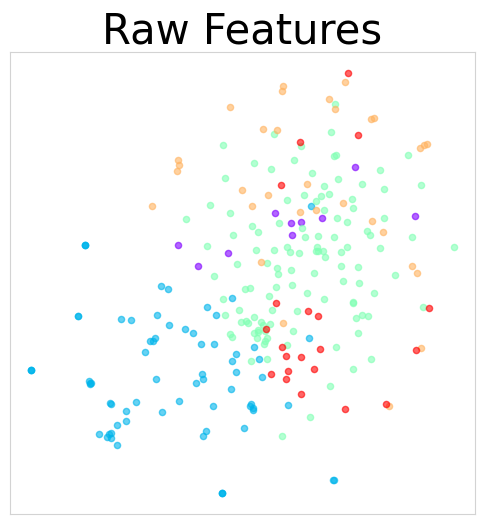}
    }
    \hspace{-0.1in}
    \subfloat{
        \includegraphics[width=0.5\textwidth]{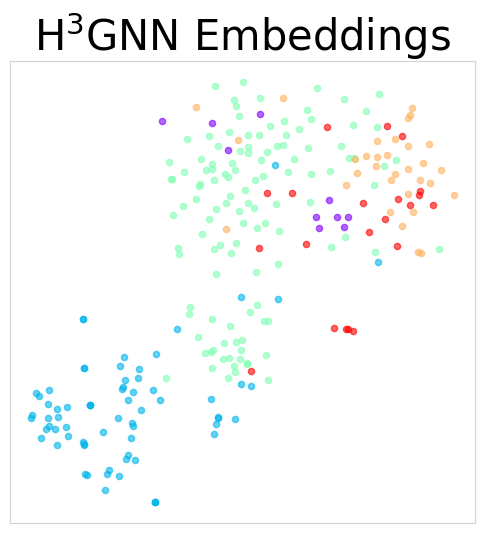}
    }\\
    \captionsetup{font=small}
    \caption{T-SNE visualizations of Wisconsin datasets.}
    \label{fig:pattern}
\end{figure}

\begin{figure}[h]
    \centering
    \vspace{-0.1in}
    \hspace{-0.3in}
    \subfloat{
        \includegraphics[width=0.5\textwidth]{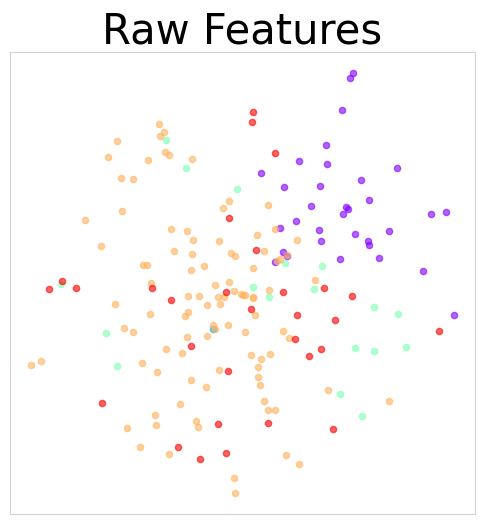}
    }
    \hspace{-0.1in}
    \subfloat{
        \includegraphics[width=0.5\textwidth]{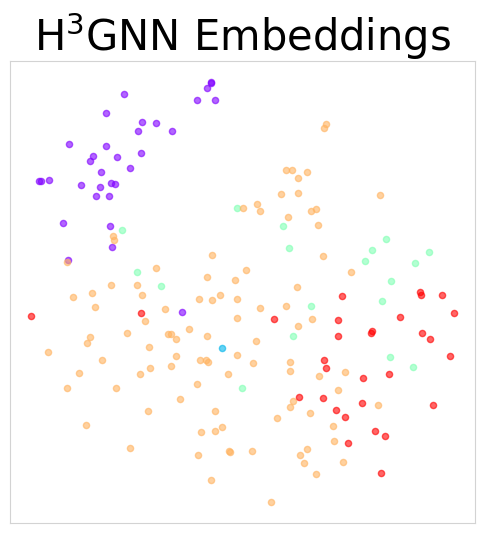}
    }\\
    \captionsetup{font=small}
    \caption{T-SNE visualizations of Texas datasets.}
    \label{fig:pattern}
\end{figure}

\begin{figure}[h]
    \centering
    \vspace{-0.1in}
    \hspace{-0.3in}
    \subfloat{
        \includegraphics[width=0.5\textwidth]{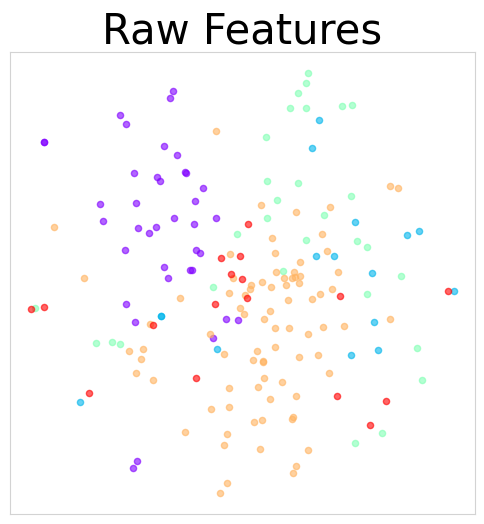}
    }
    \hspace{-0.1in}
    \subfloat{
        \includegraphics[width=0.5\textwidth]{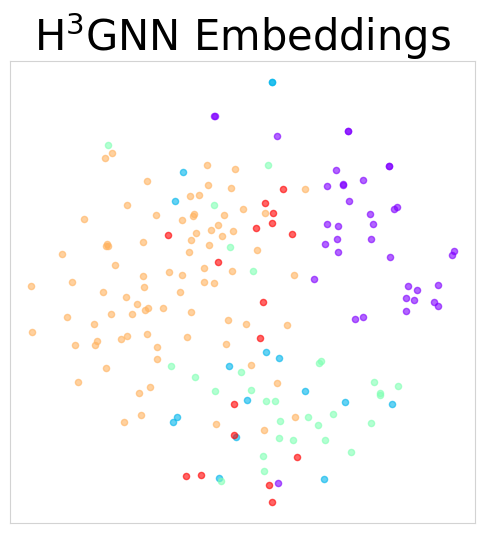}
    }\\
    \captionsetup{font=small}
    \caption{T-SNE visualizations of Cornell datasets.}
\end{figure}

\begin{figure}[h]
    \centering
    \vspace{-0.1in}
    \hspace{-0.3in}
    \subfloat{
        \includegraphics[width=0.5\textwidth]{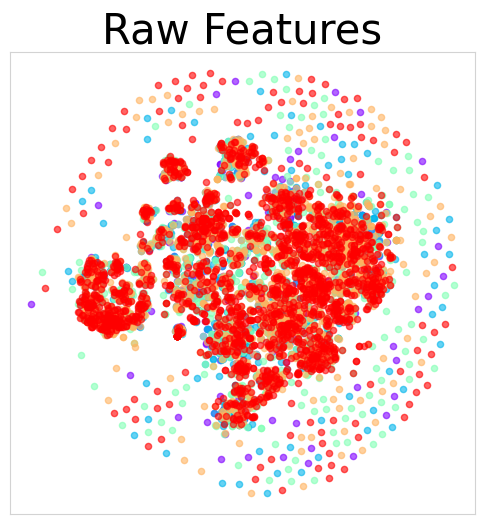}
    }
    \hspace{-0.1in}
    \subfloat{
        \includegraphics[width=0.5\textwidth]{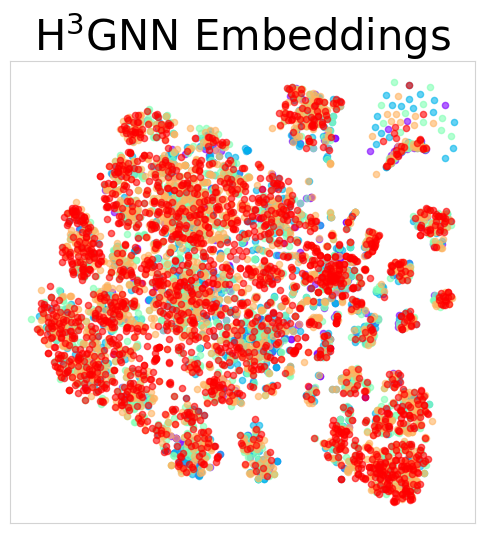}
    }\\
    \captionsetup{font=small}
    \caption{T-SNE visualizations of Actor datasets.}
\end{figure}

\setcounter{section}{7}
\section{Hyperparameters}
\label{app:hyper}

Our model's hyperparameters are tuned from the following search space:
\begin{itemize}
    \item Learning rate for SSL model: $\{0.01,\ 0.005,\ 0.001\}$.
    \item Learning rate for classifier: $\{0.01,\ 0.005,\ 0.001\}$.
    \item Weight decay for SSL model: $\{0,\ 1\times10^{-3}, \ 5\times10^{-3}, \ 8\times10^{-3}, \ 1\times10^{-4}, \ 5\times10^{-4}, \ 8\times10^{-4}\}$.
    \item Weight decay for classifier: $\{0,\ 5\times10^{-4},\ 5\times10^{-5}\}$.
    \item Dropout for Filters: $\{0.1,\ 0.3,\ 0.5,\ 0.7,\ 0.8\}$.
    \item Dropout for Attention: $\{0.1,\ 0.3,\ 0.5,\ 0.7,\ 0.8\}$.
     \item Dimension of tokens: $\{128,\ 256,\ 512,\,1024, \,2048\}$.
    \item Hidden units of filters: $\{16,\ 32,\ 64,\,128\}$.
    \item Total masking ratio: $\{0.9,\ 0.8,\ 0.5,\,0.3,\,0.2,\,0.1\}$.
    \item Dynamic masking ratio: $\{0.9,\ 0.8,\ 0.5,\,0.3,\,0.2,\,0.1\}$.
    \item Momentum: $\{0.9,\ 0.99,\ 0.999\}$.
\end{itemize}

\setcounter{section}{8}
\section{Performance Comparison for Node Clustering}
\label{app:cluster}

In this section, we present a performance comparison for node clustering. We compare our model with four groups of baseline methods:

\begin{itemize}
    \item \textit{Traditional Unsupervised Clustering Methods:} AE \cite{hinton2006reducing}, node2vec \cite{grover2016node2vec}, struc2vec \cite{ribeiro2017struc2vec}, and LINE \cite{tang2015line}.
    \item \textit{Attributed Graph Clustering Methods:} GAE (VGAE) \cite{kipf2016variational}, GraphSAGE \cite{hamilton2017inductive}, and SDCN \cite{bo2020structural}.
    \item \textit{Self Supervised Methods for Homophilic Graphs:} MVGRL \cite{hassani2020contrastive}, GRACE \cite{zhu2020deep}, and BGRL \cite{thakoor2021large}.
    \item \textit{Self Supervised Methods for Heterophilic Graphs:} DSSL \cite{xiao2022decoupled}, HGRL \cite{chen2022towards}, and MUSE \cite{yuan2023muse}.
\end{itemize}

Following the same protocol as with other baselines, we freeze the model and use the generated embeddings for \emph{k}-means clustering. We reproduce MUSE~\cite{yuan2023muse}, as it has been proven to be the state-of-the-art model for node clustering. However, the original paper does not provide any hyperparameters for node clustering on any dataset, we perform hyperparameter tuning ourselves.
For the other baselines, we report the results from baseline papers \cite{chen2022towards,yuan2023muse}. The hyperparameters search space can be found in Appendix \ref{app:hyper}. The results are shown in Table \ref{tab:cluster}.

\begin{table*}\footnotesize\centering
\caption{Clustering results (ACC in percent $\pm$ standard deviation). The best and runner-up results are highlighted with \textcolor{red}{red} and \textcolor{blue}{blue}, respectively.}
\label{tab:cluster}
\setlength{\tabcolsep}{6pt}
\scalebox{1.8}{
\begin{threeparttable}
\begin{tabular}{lcccc}
\toprule
\toprule
\multirow{2}{*}{Methods} & \multicolumn{1}{c}{Texas} & \multicolumn{1}{c}{Actor} & \multicolumn{1}{c}{Cornell} & \multicolumn{1}{c}{CiteSeer}\\
\cline{2-5}
& ACC & ACC & ACC & ACC \\
\midrule
AE \cite{hinton2006reducing} & 50.49$\pm$0.01 & 24.19$\pm$0.11 & 52.19$\pm$0.01 & 58.79$\pm$0.19\\
node2vec \cite{grover2016node2vec} & 48.80$\pm$1.93 & 25.02$\pm$0.04 & 50.98$\pm$0.01 & 20.76$\pm$0.27\\
struc2vec \cite{ribeiro2017struc2vec} & 49.73$\pm$0.01 & 22.49$\pm$0.34 & 32.68$\pm$0.01 & 21.22$\pm$0.45\\
LINE \cite{tang2015line} & 49.40$\pm$2.08 & 22.70$\pm$0.08 & 34.10$\pm$0.77 & 28.42$\pm$0.88\\
\midrule
GAE \cite{kipf2016variational} & 42.02$\pm$1.22 & 23.45$\pm$0.04 & 43.72$\pm$1.25 & 48.37$\pm$0.37\\
VGAE \cite{kipf2016variational} & 50.27$\pm$1.87 & 23.30$\pm$0.22 & 43.39$\pm$0.99 & 55.67$\pm$0.13\\
GraphSAGE \cite{hamilton2017inductive} & 56.83$\pm$0.56 & 23.08$\pm$0.29 & 44.70$\pm$2.00 & 49.28$\pm$1.18\\
SDCN \cite{bo2020structural} & 44.04$\pm$0.56 & 23.67$\pm$0.29 & 36.94$\pm$2.00 & 59.86$\pm$1.18\\
\midrule
MVGRL \cite{hassani2020contrastive} & 62.79$\pm$2.33 & 28.58$\pm$1.03 & 43.77$\pm$3.03 & 45.67$\pm$9.08\\
GRACE \cite{zhu2020deep} & 56.99$\pm$2.23 & 25.87$\pm$0.45 & 43.55$\pm$4.60 & 54.66$\pm$5.41\\
BGRL \cite{thakoor2021large} & 58.68$\pm$1.80 & 28.20$\pm$0.27 & 55.08$\pm$1.68 & 64.27$\pm$1.68\\
\midrule
DSSL \cite{xiao2022decoupled} & 57.43$\pm$3.51 & 26.15$\pm$0.46 & 44.70$\pm$2.44 & 54.32$\pm$3.69\\
HGRL \cite{chen2022towards} & 61.97$\pm$3.10 & 29.79$\pm$1.11 & 60.56$\pm$3.72 & 61.14$\pm$1.49\\
MUSE$^\dagger$ \cite{yuan2023muse} & 65.79$\pm$4.36 & 31.05$\pm$0.72 & 62.35$\pm$2.38 & \textcolor{blue}{66.03}$\pm$2.33\\
\midrule
\method+Diffi & \textcolor{blue}{76.50}$\pm$1.50 & \textcolor{blue}{31.22}$\pm$0.76 & \textcolor{blue}{73.22}$\pm$3.45 & 65.80$\pm$2.32 \\
\method+Prob & \textcolor{red}{77.05}$\pm$2.66 & \textcolor{red}{32.10}$\pm$1.51 & \textcolor{red}{74.86}$\pm$2.09 & \textcolor{red}{66.56}$\pm$3.56 \\
\bottomrule
\bottomrule
\end{tabular}
       \begin{tablenotes}
           \item[] $^\dagger$ MUSE doesn't provide any hyperparameters for node clustering.
        \end{tablenotes}
\end{threeparttable}}
\end{table*}

From the results, we can achieve the similar conclusions as node classification:
\begin{itemize}
    \item Our \method achieves significantly better performance than all baselines, including the state-of-the-art model MUSE, by a large margin on the Texas and Cornell datasets, with improvements of 11.26\% and 12.51\%, respectively. Moreover, \method slightly outperforms MUSE on Actor due to the complex mixed structural patterns, as introduced in Appendix~\ref{app:pattern}. It also attains comparable performance on Citeseer. These findings are consistent with those observed in node classification tasks. Overall, our results demonstrate that \method can generate high-quality embeddings regardless of the downstream tasks and effectively handle both heterophilic and homophilic patterns, highlighting its strong generalization capability in graph representation learning.

    \item Regarding the two masking strategies, probabilistic masking consistently outperforms difficulty masking. This finding aligns with our observations in node classification and can be attributed to a better balance between exploration and exploitation.

\end{itemize}

\setcounter{section}{9}
\section{Performance Comparison for More Datasets}
\label{app:more_data}
In this section, we present a performance comparison of node classification on additional heterophilic datasets. Specifically, we use the filtered versions of two commonly used datasets, Chameleon and Squirrel, as the original versions are known to be problematic \cite{platonov2023critical}. We perform self-tuning on all self-supervised learning models, as their original papers do not provide results on the filtered versions of the datasets. For all other baselines, we use the results reported in \cite{platonov2023critical}.

Regarding the results in Table \ref{tab:more_data}, our findings are consistent with the main outcomes reported in Table \ref{tab:major} and Appendix \ref{app:cluster}. Our \method outperforms both the supervised and self-supervised learning baselines. In particular, it significantly exceeds the state-of-the-art baselines MUSE and GREET, demonstrating its robust ability to handle both heterophilic and homophilic patterns. Moreover, when comparing the two masking strategies, probability masking consistently outperforms difficulty masking. This suggests that employing a base masking probability for all nodes, rather than focusing solely on difficult nodes during training, effectively balances exploration and exploitation.

\begin{table*}[t]
\centering
  \caption{Results of node classification (in percent $\pm$ standard deviation across ten splits). The \textcolor{red}{best} and the \textcolor{blue}{runner-up}  results are highlighted in red and blue respectively in terms of the mean accuracy.} 
  \label{tab:more_data}
  \setlength{\tabcolsep}{1.2mm}
  \resizebox{0.8\linewidth}{!}{%
  \begin{tabular}{c|lcc}
    \toprule
    \toprule
    &\multirow{1}{*}{Methods} & Chameleon(filtered) & Squirrel(filtered) \\
    \midrule
    \multirow{5}{*}{SL} &
    GCN~\cite{kipf2016semi}    & 40.89$\pm$4.12 & 39.47$\pm$1.47\\
    & GAT \cite{velivckovic2017graph}   &39.21.$\pm$3.08 & 35.62$\pm$2.06\\
    & SAGE \cite{hamilton2017inductive} &37.77.$\pm$4.14 & 36.09$\pm$1.99   \\
    & GPR-GNN \cite{chien2020adaptive} &39.93$\pm$3.30 & 38.95$\pm$1.99  \\
    & FAGCN \cite{bo2021beyond} &41.90$\pm$2.72 & 41.08$\pm$2.27  \\
    & H2GCN \cite{zhu2020beyond} &26.75.$\pm$3.64 & 35.10$\pm$1.15  \\
    \midrule
    \multirow{2}{*}{SSL}
    & MUSE \cite{yuan2023muse}  & 46.48$\pm$2.51 & 41.57$\pm$1.44 \\
    & GREET \cite{liu2022beyond}   & 44.67$\pm$2.98 & 39.69$\pm$1.85\\
    \midrule
    \multirow{2}{*}{SSL-Ours} 
    & \method +Diffi   & \textcolor{blue}{47.50}$\pm$3.27 & \textcolor{blue}{44.68} $\pm$1.68\\
    & \method +Prob   & \textcolor{red}{48.91}$\pm$3.86 & \textcolor{red}{45.49}$\pm$2.13   \\
    \bottomrule
    \bottomrule
  \end{tabular}
  }
\end{table*}

\setcounter{section}{10}
\section{Performance Comparison with Recent Baselines}
\label{app:rent_sota}
In this section, we present a performance comparison of node classification using recent and strong state-of-the-art baselines, namely PCNet \cite{li2024pc}, AEROGNN \cite{lee2023towards}, and G$^2$ \cite{rusch2022gradient}. We report the results as provided in their respective original papers. As shown in Table \ref{tab:recent_sota}, our \method achieves the best performance on the Wisconsin and Texas datasets, while delivering comparable results to state-of-the-art baselines on the Actor and Cornell datasets. This indicates \method's ability to handle complex mixed patterns in graphs.

\begin{table*}\footnotesize\centering
\caption{Results of node classification (in percent $\pm$ standard deviation across ten splits). The \textcolor{red}{best} and the \textcolor{blue}{runner-up}  results are highlighted in red and blue respectively in terms of the mean accuracy.}
\label{tab:recent_sota}
\setlength{\tabcolsep}{6pt}{
\scalebox{1.8}{
\begin{threeparttable}
\begin{tabular}{lcccc}
\toprule
\toprule
\multirow{2}{*}{Methods} & \multicolumn{1}{c}{Cornell} & \multicolumn{1}{c}{Texas} & \multicolumn{1}{c}{Wisconsin} & \multicolumn{1}{c}{Actor} \\
\cline{2-5}
& ACC & ACC & ACC & ACC \\
\midrule
PCNet \cite{li2024pc} & 82.16$\pm$ 2.70 & 88.11$\pm$2.17 & 88.63$\pm$ 2.75 & \textcolor{red}{37.80}$\pm$ 0.64\\
AEROGNN \cite{lee2023towards} & 81.24$\pm$6.80 & 84.35$\pm$5.20 & 84.80$\pm$3.30 & 36.57$\pm$1.10\\
G$^2$ \cite{rusch2022gradient} & \textcolor{red}{86.22}$\pm$4.90 & 87.57$\pm$3.86 & 87.84$\pm$3.49 & ---\\
\midrule
\method +Diffi (Ours)   & 84.60$\pm$2.43 & \textcolor{red}{92.46}$\pm$2.91 & \textcolor{blue}{92.55}$\pm$3.49 & 36.75$\pm$1.10  \\
\method +Prob (Ours)  & \textcolor{blue}{84.86}$\pm$2.48 & \textcolor{blue}{92.45}$\pm$3.78 & \textcolor{red}{92.89}$\pm$3.09 & \textcolor{blue}{37.00}$\pm$0.91  \\
\bottomrule
\bottomrule
\end{tabular}
\end{threeparttable}}}
\end{table*}

\end{document}